\documentclass{article} 
\usepackage[pagebackref=false,breaklinks=true,letterpaper=true,colorlinks=true,citecolor=blue,bookmarks=false]{hyperref}

\usepackage{iclr2019_conference,times}


\usepackage{microtype}
\usepackage{graphicx}
\usepackage{subfigure}
\usepackage{booktabs} 
\usepackage{epsfig}
\usepackage{amsmath}
\usepackage{amssymb}
\usepackage{amsfonts}
\usepackage{dsfont}
\usepackage[utf8]{inputenc}
\usepackage{multirow}
\usepackage{tabulary}
\usepackage{rotating}
\usepackage{bbding}
\usepackage{bm}
\usepackage{verbatim}
\usepackage{wrapfig}
\usepackage{longtable}
\usepackage{amsthm}

\newtheorem{prop}{Proposition}


\newcommand{\eg} {\emph{e.g. }}
\newcommand{\ie} {\emph{i.e. }}
\newcommand{\vs} {\emph{v.s. }}
\newcommand*{\tran}{{^{\mkern-1.5mu\mathsf{T}}}}
\newcommand{\w} {{ \mathbf{w} }}

\newcommand{\x} {{ \mathbf{x} }}

\newcommand{\sw} {{ \tilde{\w} }}
\newcommand{\swj} {{ {\tilde{\w}^j} }}
\newcommand{\ord} {{ {s\hspace{-1pt}g\hspace{-1pt}d} }}
\newcommand{\bn} {{ {b\hspace{-1pt}n} }}
\newcommand{\wn} {{ {w\hspace{-1pt}n} }}

\newcommand{\hh} {{ \hat{h} }}

\newcommand{\eff} {{ \mathrm{eff} }}

\newenvironment{customthm}[1]
{\innercustomthm}
{\endinnercustomthm}

\hyphenpenalty=1000

\title{Towards Understanding Regularization in \\Batch Normalization }


\author{
Ping Luo$^{1,3}$\thanks{The first three authors contribute equally. Corresponding to pluo.lhi@gmail.com, \{wangxinjiang, pengzhanglin\}@sensetime.com, weqish@link.cuhk.edu.hk.}\hspace{15pt}Xinjiang Wang$^{2\ast}$\hspace{15pt}Wenqi Shao$^{1\ast}$\hspace{15pt}Zhanglin Peng$^2$\\
$^1$The Chinese University of Hong Kong\hspace{15pt}$^2$SenseTime Research\hspace{15pt}$^3$The University of Hong Kong\\
}

%

\iclrfinalcopy 
\begin{document}

\maketitle
\vspace{-15pt}

\begin{abstract}
\vspace{-5pt}
Batch Normalization (BN)
improves both convergence and generalization in training neural networks.
This work understands these phenomena theoretically.
We analyze BN by using a basic block of neural networks, consisting of a kernel layer, a BN layer, and a nonlinear activation function.
This basic network helps us understand the impacts of BN
%
in three aspects.
First, by viewing BN as an implicit regularizer, BN can be decomposed into population normalization (PN) and gamma decay as an explicit regularization.
%
Second, learning dynamics of BN and the regularization show that training converged with large maximum and effective learning rate.
Third, generalization of BN is explored by using statistical mechanics.
Experiments demonstrate that BN in convolutional neural networks share the same traits of regularization as the above analyses.
%
\vspace{-5pt}
\end{abstract}

\section{Introduction}\label{sec:intro}

Batch Normalization (BN) is an indispensable component in many deep neural networks \citep{resnet,densenet}. BN has been widely used in various areas such as machine vision, speech and natural language processing.
Experimental studies \citep{BN} suggested that BN improves convergence and generalization by enabling large learning rate and preventing overfitting when training deep networks.
Understanding BN theoretically is a key question.


This work investigates regularization of BN as well as its optimization and generalization in a single-layer perceptron, which is a building block of deep models, consisting of a kernel layer, a BN layer, and a nonlinear activation function such as ReLU.
%
%
The computation of BN is written by

\vspace{-15pt}
\begin{equation}\label{eq:BN}
\small
y=g(\hh),~~\hh=\gamma\frac{h-\mu_\mathcal{B}}{\sigma_\mathcal{B}}+\beta~~\mathrm{and}~~h=\w\tran\x.
\end{equation}
\vspace{-15pt}

This work denotes a scalar and a vector by using lowercase letter (\eg $x$) and bold lowercase letter (\eg $\textbf{x}$) respectively.
In Eqn.\eqref{eq:BN},
$y$ is the output of a neuron, $g(\cdot)$ denotes an activation function, $h$ and $\hh$ are hidden values before and after batch normalization, $\w$ and $\x$ are kernel weight vector and network input respectively.
%
In BN, $\mu_\mathcal{B}$ and $\sigma_\mathcal{B}$ represent the mean and standard deviation of $h$. They are estimated within a batch of samples for each neuron independently.
$\gamma$ is a scale parameter and $\beta$ is a shift parameter.
In what follows, Sec.\ref{sec:overview} overviews assumptions and main results, and Sec.\ref{sec:related} presents relationships with previous work.

\vspace{-2pt}
\subsection{Overview of Results}\label{sec:overview}
\vspace{-2pt}

We overview results in three aspects.

$\bullet$ First, Sec.\ref{sec:view} decomposes BN into population normalization (PN) and gamma decay. To better understand BN, we treat a single-layer perceptron with ReLU activation function as an illustrative case.
Despite the simplicity of this case, it is a building block of deep networks and has been widely adopted in theoretical analyses such
as proper initialization \citep{krogh_generalization_1992,advani_high-dimensional_2017}, dropout \citep{wager_dropout_2013}, weight decay and data augmentation \citep{bos_statistical_1998}.
The results in Sec.\ref{sec:view} can be extended to deep neural networks as presented in Appendix \ref{app:deep-reg}.

Our analyses assume that neurons at the BN layer are independent similar to \citep{WN,l2BN,BayeUnEs}, as the mean and the variance of BN are estimated individually for each neuron of each layer.
The form of regularization in this study does not rely on Gaussian assumption on the network input and the weight vector, meaning our assumption is milder than those in \citep{WNdynamic,LN,WN}.


Sec.\ref{sec:view} tells us that BN has an explicit regularization form, gamma decay, where $\mu_\mathcal{B}$ and $\sigma_\mathcal{B}$
have different impacts:
%
(1) $\mu_\mathcal{B}$ {discourages} reliance on a single neuron and {encourages} different neurons to have equal magnitude, in the sense that corrupting individual neuron does not harm generalization. This phenomenon was also found empirically in a recent work \citep{single}, but has not been established analytically.
%
(2) $\sigma_\mathcal{B}$ reduces kurtosis of the input distribution as well as {correlations} between neurons.
(3) The regularization strengths of these statistics are {inversely proportional} to the batch size $M$, indicating that BN with large batch would decrease generalization. 
(4)
Removing either one of $\mu_\mathcal{B}$ and $\sigma_\mathcal{B}$ could imped convergence and generalization.

%
$\bullet$ Second, by using ordinary differential equations (ODEs),
Sec.\ref{sec:dynamic} shows that gamma decay enables the network trained with BN to converge with {large maximum learning rate and effective learning rate},
compared to the network trained without BN or trained with weight normalization (WN) \mbox{\citep{WN}} that is a counterpart of BN.
The maximum learning rate (LR) represents the largest LR value that allows training to converge to a fixed point without diverging, while effective LR represents the actual LR in training. Larger maximum and effective LRs imply faster convergence rate.
%

$\bullet$ Third, Sec.\ref{sec:gen} compares generalization errors of BN, WN, and vanilla SGD by using statistical mechanics. The ``large-scale'' regime is of interest, where number of samples $P$ and number of neurons $N$ are both large
but their ratio $P/N$ is finite.
In this regime, the generalization errors are quantified both analytically and empirically.
%

Numerical results in Sec.\ref{sec:exp} show that BN in CNNs has the same traits of regularization as disclosed above.

\vspace{-3pt}
\subsection{Related Work}\label{sec:related}
\vspace{-3pt}

\textbf{Neural Network Analysis}.
Many studies conducted theoretical analyses of neural networks \citep{optimal-perceptron,on-line-committe,Dynamics-perceptron,Geometry,Electron-proton,Globally,Expressive,landscape,Yuandong}.
For example, for a multilayer network with linear activation function, \mbox{\cite{diffNN}} explored its SGD dynamics and \cite{wopoor} showed that every local minimum is global.
\cite{Yuandong} studied the critical points and convergence behaviors of a 2-layered network with ReLU units.
\cite{Electron-proton} investigated a teacher-student model when the activation function is harmonic.
In \citep{on-line-committe}, the learning dynamics of a committee machine were discussed when the activation function is error function $\mathrm{erf}(x)$.
Unlike previous work, this work analyzes regularization emerged in BN and its impact to both learning and generalization, which are still unseen in the literature.

\textbf{Normalization}.
Many normalization methods have been proposed recently.
For example,
BN \citep{BN} was introduced to stabilize the distribution of input data of each hidden layer.
Weight normalization (WN) \citep{WN} decouples the lengths of the network parameter vectors from their directions, by normalizing the parameter vectors to unit length.
The dynamic of WN was studied by using a single-layer network \citep{WNdynamic}.
\citet{Disharmony} diagnosed the compatibility of BN and dropout \citep{dropout} by reducing the variance shift produced by them.

Moreover, \citet{l2BN} showed that weight decay has no regularization effect when using together with BN or WN.
\citet{LN} demonstrated when BN or WN is employed, back-propagating gradients through a hidden layer is scale-invariant with respect to the network parameters. \citet{santurkar_how_2018} gave another perspective of the role of BN during training instead of reducing the covariant shift. They argued that BN results in a smoother optimization landscape and the Lipschitzness is strengthened in networks trained with BN.
However, both analytical and empirical results of regularization in BN are still desirable.
Our study explores regularization, optimization, and generalization of BN in the scenario of online learning.

\textbf{Regularization}.
\citet{BN} conjectured that BN {implicitly} regularizes training to prevent overfitting. \citet{rethinking-generalization} categorized BN as an implicit regularizer from experimental results.
\citet{szegedy_rethinking_2015} also conjectured that in the Inception network, BN behaves similar to dropout to improve the generalization ability. \citet{gitman_comparison_2017} experimentally compared BN and WN, and also confirmed the better generalization of BN.
In the literature there are also {implicit regularization} schemes other than BN.
For instance, random noise in the input layer for data augmentation has long been discovered equivalent to a weight decay method, in the sense that the inverse of the signal-to-noise ratio acts as the decay factor \citep{krogh_generalization_1992,rifai_adding_2011}.
Dropout \citep{dropout} was also
proved able to regularize training by using the generalized linear model \citep{wager_dropout_2013}.
\vspace{-5pt}
\section{A Probabilistic Interpretation of BN}\label{sec:view}
\vspace{-5pt}

The notations in this work are summarized in Appendix Table \ref{tab:notation} for reference.

Training the above single-layer perceptron with BN in Eqn.\eqref{eq:BN} typically involves minimizing a negative log likelihood function with respect to a set of network parameters ${\theta}=\{\w,\gamma,\beta\}$.
%
Then the loss function is defined by

\vspace{-15pt}
\begin{equation}\label{eq:loss}
\small
\frac{1}{P}\sum_{j=1}^P\ell(\hat{h}^j)=-\frac{1}{P}\sum_{j=1}^P\log p(y^j|\hat{h}^j;{\theta})+\zeta\|{\theta}\|_2^2,
\end{equation}
\vspace{-15pt}

where $p(y^j|\hat{h}^j;{\theta})$ represents the likelihood function of the network and $P$ is number of training samples.
As Gaussian distribution is often employed as prior distribution for the network parameters, we have a regularization term $\zeta\|{\theta}\|_2^2$ known as weight decay \citep{AlexNet} that is a popular technique in deep learning, where $\zeta$ is a coefficient.

To derive regularization of BN, we treat $\mu_\mathcal{B}$ and $\sigma_\mathcal{B}$ as random variables.
%
Since one sample $\x$ is seen many times in the entire training course, and at each time $\x$ is presented with the other samples in a batch that is drawn randomly,
$\mu_\mathcal{B}$ and $\sigma_\mathcal{B}$ can be treated as injected random noise for $\x$.

\textbf{Prior of $\boldsymbol{\mu_{\mathcal{B}}},\boldsymbol{\sigma_{\mathcal{B}}}$.} By following \citep{BayeUnEs}, we find that BN also induces Gaussian priors for $\mu_{\mathcal{B}}$ and $\sigma_{\mathcal{B}}$. We have $\mu_{\mathcal{B}}\sim\mathcal{N}(\mu_{\mathcal{P}},\frac{\sigma_{P}^{2}}{M})$ and $
\sigma_{\mathcal{B}}\sim\mathcal{N}(\sigma_{P},\frac{\rho+2}{4M})$,
where $M$ is batch size, $\mu_{\mathcal{P}}$ and $\sigma_{\mathcal{P}}$ are population mean and standard deviation respectively, and $\rho$ is kurtosis that measures the peakedness of the distribution of $h$.
These priors tell us that $\mu_{\mathcal{B}}$ and $\sigma_{\mathcal{B}}$ would produce Gaussian noise in training. There is a tradeoff regarding this noise. For example, when $M$ is small, training could diverge because the noise is large. This is supported by experiment of BN \citep{GN} where training diverges when $M=2$ in ImageNet \citep{imagenet12}.
When $M$ is large, the noise is small
because $\mu_{\mathcal{B}}$ and $\sigma_{\mathcal{B}}$ get close to $\mu_{\mathcal{P}}$ and $\sigma_{\mathcal{P}}$.
%
It is known that $M>30$ would provide a moderate noise, as the sample statistics converges in probability to the population statistics by the weak Law of Large Numbers. This is also supported by experiment \citep{BN} where BN with $M=32$ already works well in ImageNet.
%
%
\vspace{-2pt}
\subsection{A Regularization Form}
\vspace{-2pt}

The loss function in Eqn.(\ref{eq:loss}) can be written as an expected loss by integrating over the priors of $\mu_\mathcal{B}$ and $\sigma_\mathcal{B}$, that is,
$\frac{1}{P}\sum_{j=1}^P\mathbb{E}_{\mu_\mathcal{B},\sigma_\mathcal{B}}[\ell(\hat{h}^j)]$
where $\mathbb{E}[\cdot]$ denotes expectation.
%
%
We show that $\mu_\mathcal{B}$ and $\sigma_\mathcal{B}$ impose regularization on the scale parameter $\gamma$ by decomposing BN into population normalization (PN) and gamma decay.
%
To see this, we employ a single-layer perceptron and ReLU activation function as an illustrative example. A more rigorous description is provided in Appendix \ref{app:theorem}.


\textbf{Regularization of $\boldsymbol{\mu_\mathcal{B}},\boldsymbol{\sigma_\mathcal{B}}$.}
%
Let $\ell(\hat{h})$ be the loss function defined in Eqn.\eqref{eq:loss} and ReLU be the activation function. We have

\vspace{-17pt}
\begin{equation} 
\small
\frac{1}{P}\sum_{j=1}^P\mathbb{E}_{\mu_{\mathcal{B}},\sigma_{\mathcal{B}}}\ell(\hat{h}^j)\simeq
\underbrace{\frac{1}{P}\sum_{j=1}^P\ell(\bar{h}^j)}_{\mathrm{PN}}~+\underbrace{\zeta(h)\gamma^2}_{\mathrm{gamma~decay}},
~~\mathrm{and}~~\zeta(h)=\underbrace{\frac{\rho+2}{8M}\mathcal{I}(\gamma)}_{\mathrm{from~} \sigma_{\mathcal{B}}}+\underbrace{\frac{1}{2M}\frac{1}{P}\sum_{j=1}^P\sigma(\bar{h}^j)}_{\mathrm{from~} \mu_{\mathcal{B}}},\label{eq:theorem1}
\end{equation}
\vspace{-13pt}

where $\bar{h}^j=\gamma\frac{h^j-\mu_{\mathcal{P}}}{\sigma_{\mathcal{P}}}+\beta$ and $h^j=\w\tran\x^j$ represent the computations of PN.
$\zeta(h)\gamma^2$ represents gamma decay, where $\zeta(h)$ is an adaptive decay factor depended on the hidden value $h$. Moreover, $\rho$ is the kurtosis of distribution of $h$, $\mathcal{I}(\gamma)$ represents an estimation of the Fisher information of $\gamma$ and $\mathcal{I}(\gamma)=\frac{1}{P}\sum_{j=1}^P(\frac{\partial\ell(\hat{h}^j)}{\partial\gamma})^2$, and $\sigma(\cdot)$ is a sigmoid function.

%


From Eqn.\eqref{eq:theorem1}, we have several observations that have both theoretical and practical values.

$\bullet$ First,
%
PN replaces the batch statistics $\mu_{\mathcal{B}},\sigma_{\mathcal{B}}$ in BN by the population statistics $\mu_{\mathcal{P}},\sigma_{\mathcal{P}}$.
In gamma decay, computation of $\zeta(h)$ is {data-dependent}, making it differed from weight decay where the coefficient is determined manually.
In fact, Eqn.\eqref{eq:theorem1} recasts the randomness of BN in a deterministic manner, not only enabling us to apply methodologies such as ODEs and statistical mechanics to analyze BN, but also inspiring us to imitate BN's performance by WN without computing batch statistics in empirical study.

$\bullet$ Second, PN is closely connected to WN, which is independent from sample mean and variance. WN \citep{WN} is defined by $\upsilon\frac{{\w}^{T}{\x}}{{||\w||_2}}$ that normalizes the weight vector $\w$ to have unit variance, where $\upsilon$ is a learnable parameter.
Let each diagonal element of the covariance matrix of $\x$ be $a$ and all the off-diagonal elements be zeros. $\bar{h}^j$ in Eqn.\eqref{eq:theorem1} can be rewritten as

\vspace{-15pt}
\begin{equation}\label{eq:h}
\small
\bar{h}^j=\gamma\frac{\mathbf{w}^{T}\mathbf{x}^j-\mu_{\mathcal{P}}}{\sigma_{\mathcal{P}}}+\beta=\upsilon\frac{{\w}^{T}{\x}^j}{{||\w||_2}}+b,
\end{equation}
\vspace{-15pt}

where $\upsilon=\frac{\gamma}{a}$ and $b=-\frac{\gamma\mu_{\mathcal{P}}}{a{||\w||_2}}+\beta$.
%
%
%
Eqn.(\ref{eq:h}) removes the estimations of statistics and eases our analyses of regularization for BN.

$\bullet$ Third, $\mu_\mathcal{B}$ and $\sigma_\mathcal{B}$ produce different strengths in $\zeta(h)$.
As shown in Eqn.(\ref{eq:theorem1}), the strength from $\mu_\mathcal{B}$ depends on the expectation of $\sigma(\bar{h}^j)\in[0,1]$, which represents excitation or inhibition of a neuron,
meaning that {a neuron with larger output may exposure to larger regularization}, encouraging different neurons to have equal magnitude.
%
%
This is consistent with empirical result \citep{single} which prevented reliance on single neuron to improve generalization.
The strength from $\sigma_\mathcal{B}$ works as a complement for $\mu_\mathcal{B}$.
For a single neuron, $\mathcal{I}(\gamma)$ represents the norm of gradient, implying that BN punishes large gradient norm.
For multiple neurons, $\mathcal{I}(\gamma)$ is the Fisher information matrix of $\gamma$, meaning that BN would penalize correlations among neurons.
%
Both $\sigma_\mathcal{B}$ and $\mu_\mathcal{B}$ are important, removing either one of them would imped performance.

\textbf{Extensions to Deep Networks.} The above results can be extended to deep networks as shown in Appendix \ref{app:deep-reg} by decomposing the expected loss at a certain hidden layer.
We also demonstrate the results empirically in Sec.\ref{sec:exp},
%
where we observe that CNNs trained with BN share similar traits of regularization as discussed above.
%
%
%

\vspace{-2pt}
\section{Optimization with Regularization}\label{sec:dynamic}
\vspace{-2pt}

Now we show that BN converges with large maximum and effective learning rate (LR), where the former one is the largest LR when training converged, while the latter one is the actual LR during training.
With BN, we find that both LRs would be larger than a network trained without BN.
Our result explains why BN enables large learning rates used in practice \citep{BN}.
%

Our analyses are conducted in three stages. First, we establish dynamical equations of a teacher-student model in the thermodynamic limit and acquire the fixed point.
Second, we investigate the eigenvalues of the corresponding Jacobian matrix at this fixed point.
Finally, we calculate the maximum and the effective LR.

\textbf{Teacher-Student Model}. We first introduce useful techniques from statistical mechanics (SM). With SM, a student network is dedicated to learn relationship between a Gaussian input and an output by using a weight vector $\w$ as parameters. It is useful to characterize behavior of the student by using a teacher network with $\w^\ast$ as a ground-truth parameter vector.
We treat single-layer perceptron as a student, which is optimized by minimizing the euclidian distance between its output and the supervision provided by a teacher without BN.
The student and the teacher have identical activation function.
%


\textbf{Loss Function.} We define a loss function of the above teacher-student model by $\frac{1}{P}\sum_{j=1}^P\ell(\x^j)=\frac{1}{P}\sum_{j=1}^P\big[g({\w^\ast}\tran\x^j)-
g(\sqrt{N}\gamma\frac{\w\tran\x^j
}{\|\w\|_2})\big]^2+\zeta\gamma^2$, where
$g({\w^\ast}\tran\x^j)$ represents supervision from the teacher, while $g(\sqrt{N}\gamma\frac{\w\tran\x^j
}{\|\w\|_2})$ is the output of student trained to mimic the teacher.
This student is defined by following Eqn.(\ref{eq:h}) with $\nu=\sqrt{N}\gamma$ and the bias term is absorbed into $\w$.
The above loss function represents BN by using WN with gamma decay, and it is sufficient to study the learning rates of different approaches.
%
Let $\theta=\{{\w},\gamma\}$ be a set of parameters updated by SGD, \ie
%
$\theta^{j+1}=\theta^{j}-\eta\frac{\partial\ell(\x^j)}{\partial\theta^j}$ where $\eta$ denotes learning rate.
The update rules for $\w$ and $\gamma$ are

\vspace{-15pt}
\begin{equation}
\small
{\w^{j+1}} -\w^{j}={ \eta\delta^j(\frac{\gamma^j\sqrt{N}}{\|\w^j\|_2}\x^j-
\frac{{\swj}\tran\x^j}{\|\w^j\|_2^2}\w^j)}~~~\mathrm{and}~~~ {\gamma^{j+1}} -\gamma^{j}=\eta(\frac{\delta^j\sqrt{N}{\w^j}\tran\x^j}{\|\w^j\|_2}
-\zeta\gamma^j), \label{eq:w}
\end{equation}
\vspace{-15pt}

where $\sw^j$ denotes a normalized weight vector of the student, that is, $\sw^j=\sqrt{N}\gamma^j\frac{\w^j}{\|\w^j\|_2}$, and $\delta^j=g'(\swj\tran\x^j)[g({\w^\ast}\tran\x^j)-
g(\swj\tran\x^j)]$ represents the gradient\footnote{$g'(x)$ denotes the first derivative of $g(x)$.} for clarity of notations.
%

\textbf{Order Parameters}. As we are interested in the ``large-scale'' regime where both $N$ and $P$ are large and their ratio $P/N$ is finite, it is difficult to examine a student with parameters in high dimensions directly.
Therefore, we transform the weight vectors to order parameters that fully characterize interactions between the student and the teacher network.
In this case, the parameter vector can be reparameterized by using a vector of three elements including $\gamma$, $R$, and $L$.
In particular,
$\gamma$ measures length of the normalized weight vector $\sw$, that is, ${\sw}\tran{\sw}=N\gamma^2\frac{{\w}\tran{\w}}{\|\w\|_2^2}=N\gamma^2$.
The parameter $R$ measures angle (overlapping ratio) between the weight vectors of student and teacher.
We have $R=\frac{\sw\tran\w^\ast}{\|\sw\|\|\w^\ast\|}=\frac{1}{N\gamma}\sw\tran\w^\ast$, where the norm of the ground-truth vector is $\frac{1}{N}\w^\ast\tran\w^\ast=1$.
%
Moreover, $L$ represents length of the original weight vector $\w$ and $L^2=\frac{1}{N}\w\tran\w$.
%
%


\textbf{Learning Dynamics.}
The update equations \eqref{eq:w} can be transformed into a set of differential equations (ODEs) by using the above order parameters. This is achieved by treating the update step $j$ as a continuous time variable $t=\frac{j}{N}$.
They can be turned into differential equations because
the contiguous step $\Delta t=\frac{1}{N}$ approaches zero in the thermodynamic limit when $N\rightarrow\infty$.
We obtain a dynamical system of three order parameters

\vspace{-15pt}
\begin{equation}
\frac{d \gamma}{d t}=\eta\frac{I_{1}}{\gamma}-\eta\zeta \gamma,\label{eq:Q}~~~
\frac{d R}{dt}=\eta\frac{\gamma}{{L}^2}I_{3}-\eta\frac{R}{{L}^2}I_{1}
-\eta^{2}\frac{\gamma^{2}R}{2{L}^{4}}I_{2},~~~\mathrm{and}~~~
\frac{d {L}}{dt}=\eta^{2}\frac{\gamma^{2}}{2{L}^{3}}I_{2},
\end{equation}
\vspace{-13pt}

where $I_1=\mathds{E}_\x[\delta{\sw}\tran\x]$, $I_2=\mathds{E}_\x[\delta^2\x\tran\x]$, and $I_3=\mathds{E}_\x[\delta{\w^\ast}\tran\x]$ are defined to simplify notations.
The derivations of Eqn.\eqref{eq:Q} can be found in {Appendix} \ref{app:dyn}.

\vspace{-2pt}
\subsection{Fixed Point of the Dynamical System}\label{sec:fix-suff-data}
\vspace{-2pt}

\begin{wraptable}{r}{6cm}
\centering
\scriptsize\vspace{-10pt}
\begin{tabular}{p{8pt}<{\centering}|p{37pt}<{\centering}|p{55pt}<{\centering}|p{26pt}<{\centering}}
\hline
  &  $(\gamma_0,R_0,L_0)$  & $\eta_{\max}$ ($R$) & $\eta_\eff$ ($R$)\\
\hline
 BN & $(\gamma_0,1,L_0)$  & $\big(\frac{\partial(\gamma_0 I_3-I_1)}{\gamma_0 \partial R}-\zeta\gamma_0\big)/\frac{\partial I_2}{2\partial R}$ & $\frac{\eta\gamma_0}{L_0^2}$\\
 WN & $(1,1,L_0)$  & $\frac{\partial(I_3-I_1)}{\partial R}/\frac{\partial I_2}{2\partial R}$ & $\frac{\eta}{L^2_0}$\\
 SGD & $(1,1,1)$ & $\frac{\partial(I_3-I_1)}{\partial R}/\frac{\partial I_2}{2\partial R}$ & $\eta$\\
 \hline
\end{tabular}\vspace{-3pt}
\caption{{\small Comparisons of fixed points, $\eta_{\max}$ for $R$, and $\eta_\eff$ for $R$. A fixed point is denoted as $(\gamma_0,R_0,L_0)$.
}} \label{tab:lr}
\end{wraptable}
To find the fixed points of (\ref{eq:Q}),
%
%
%
we set $d\gamma/dt=dR/dt=d{L}/dt=0$.
The fixed points of BN, WN, and vanilla SGD (without BN and WN) are given in Table \ref{tab:lr}.
%
In the thermodynamic limit, the optima denoted as $(\gamma_0,R_0,L_0)$ would be $(\gamma_0,R_0,L_0)=(1,1,1)$.
Our main interest is the overlapping ratio $R_0$ between the student and the teacher, because it optimizes the direction of the weight vector regardless of its length.
We see that $R_0$ for all three approaches attain optimum `1'.
Intuitively, in BN and WN, this optimal solution does not depend on the value of $L_0$ because their weight vectors are normalized.
In other words, WN and BN are easier to optimize than vanilla SGD, unlike SGD where both $R_0$ and $L_0$ have to be optimized to `1'.
Furthermore, $\gamma_0$ in BN depends on the activation function. For ReLU, we have $\gamma_0^\bn=\frac{1}{2\zeta+1}$ (see Proposition \ref{prop:fixp} in Appendix \ref{app:dyn}), meaning that norm of the normalized weight vector relies on the decay factor $\zeta$.
In WN, we have $\gamma_0^\wn=1$ as WN has no regularization on $\gamma$.
%

\vspace{-2pt}
\subsection{Maximum and Effective Learning Rates}
\vspace{-2pt}

With the above fixed points, we derive the maximum and the effective LR.
%
Specifically, we analyze eigenvalues and eigenvectors of the Jacobian matrix corresponding to Eqn.\eqref{eq:Q}.
%
We are interested in the LR to approach $R_0$.
We find that this optimum value only depends on its corresponding eigenvalue denoted as $\lambda_R$. We have
$\lambda_R=\frac{\partial I_2}{\partial R}\frac{\eta{\gamma_0}}{2L_0^2}
(\eta_{\max}-\eta_{\eff})$,
where $\eta_{\max}$ and $\eta_{\eff}$ represent the maximum and effective LR (proposition \ref{prop:eigen} in Appendix \ref{app:dyn}), which are given in Table \ref{tab:lr}.
We demonstrate that $\lambda_R<0$ if and only if $\eta_{\max}>\eta_{\eff}$, such that the fixed point $R_0$ is stable for all approaches (proposition \ref{prop:constraint} in Appendix \ref{app:lr}).
Moreover, it is also able to show that $\eta_{\max}$ of BN ($\eta_{\max}^\bn$) is larger than WN and SGD, enabling $R$ to converge with a larger learning rate.
%
%
For ReLU as an example, we find that
$\eta_{\max}^\bn\geq\eta_{\max}^{\{\wn,\ord\}}+2\zeta$ (proposition \ref{prop:maxeta} in Appendix \ref{app:maxlr}).
The larger maximum LRs enables the network to be trained more stably and has the potential to be combined with other stabilization techniques \citep{Robust} during optimization.
The effective LRs shown in Table \ref{tab:lr} are consistent with previous work \citep{l2BN}.

\vspace{-5pt}
\section{Generalization Analysis}\label{sec:gen}
\vspace{-5pt}

Here we investigate generalization of BN by using a teacher-student model that
minimizes a loss function $\frac{1}{P}\sum_{j=1}^{P}((y^\ast)^{j}-y^{j})^{2}$, where ${y^\ast}$ represents the teacher's output and $y$ is the student's output.
We compare BN with WN+gamma decay and vanilla SGD.
All of them share the same teacher network whose output is a noise-corrupted linear function $y^\ast={\w^{\ast}}\tran\x+s$,
where $\x$ is drawn from $\mathcal{N}(0,\frac{1}{N})$ and $s$ is an unobserved Gaussian noise.
We are interested to see how the above methods resist this noise by using student networks with both identity (linear) and ReLU activation functions.

For \textbf{vanilla SGD}, the student is computed by $y=g(\w\tran\x)$ with $g(\cdot)$ being either identity or ReLU, and $\w$ being the weight vector to optimize, where $\w$ has the same dimension as $\w^\ast$.
The loss function of vanilla SGD is $\ell^\ord=\frac{1}{P}\sum_{j=1}^{P}\big(y^\ast-g(\w\tran\x^j)\big)^{2}$.
%
%
For \textbf{BN},
%
the student is defined as $y=\gamma\frac{\w\tran\x-\mu_{\mathcal{B}}}{\sigma_{\mathcal{B}}}+\beta$.
%
As our main interest is the weight vector, we freeze the bias by setting $\beta=0$.
Therefore, the batch average term $\mu_\mathcal{B}$ is also unnecessary to avoid additional parameters, and the loss function is written as $\ell^\bn=\frac{1}{P}\sum_{j=1}^{P}\big((y^\ast)^j-\gamma\w\tran\x^j/\sigma_{\mathcal{B}}\big)^{2}$.
For \textbf{WN+gamma decay},
the student is computed similar to Eqn.\eqref{eq:h} by using $y=\sqrt{N}\gamma\frac{\w\tran\x}{\|\w\|_2}$.
%
%
Then the loss function is defined by $\ell^\wn=\frac{1}{P}\sum_{j=1}^P\big((y^\ast)^j-\sqrt{N}\gamma\frac{\w\tran\x^j
}{\|\w\|_2}\big)^2+\zeta\|\gamma\|^2_2$.
%
%
With the above definitions, the three approaches are studied under the same teacher-student framework, where
their generalization errors can be strictly compared with the other factors ruled out.
%
%

\vspace{-5pt}
\subsection{Generalization Errors}
\vspace{-2pt}

\begin{wrapfigure}{r}{0.35\textwidth}
\vspace{-40pt}
  \begin{center}
    \includegraphics[width=0.35\textwidth]{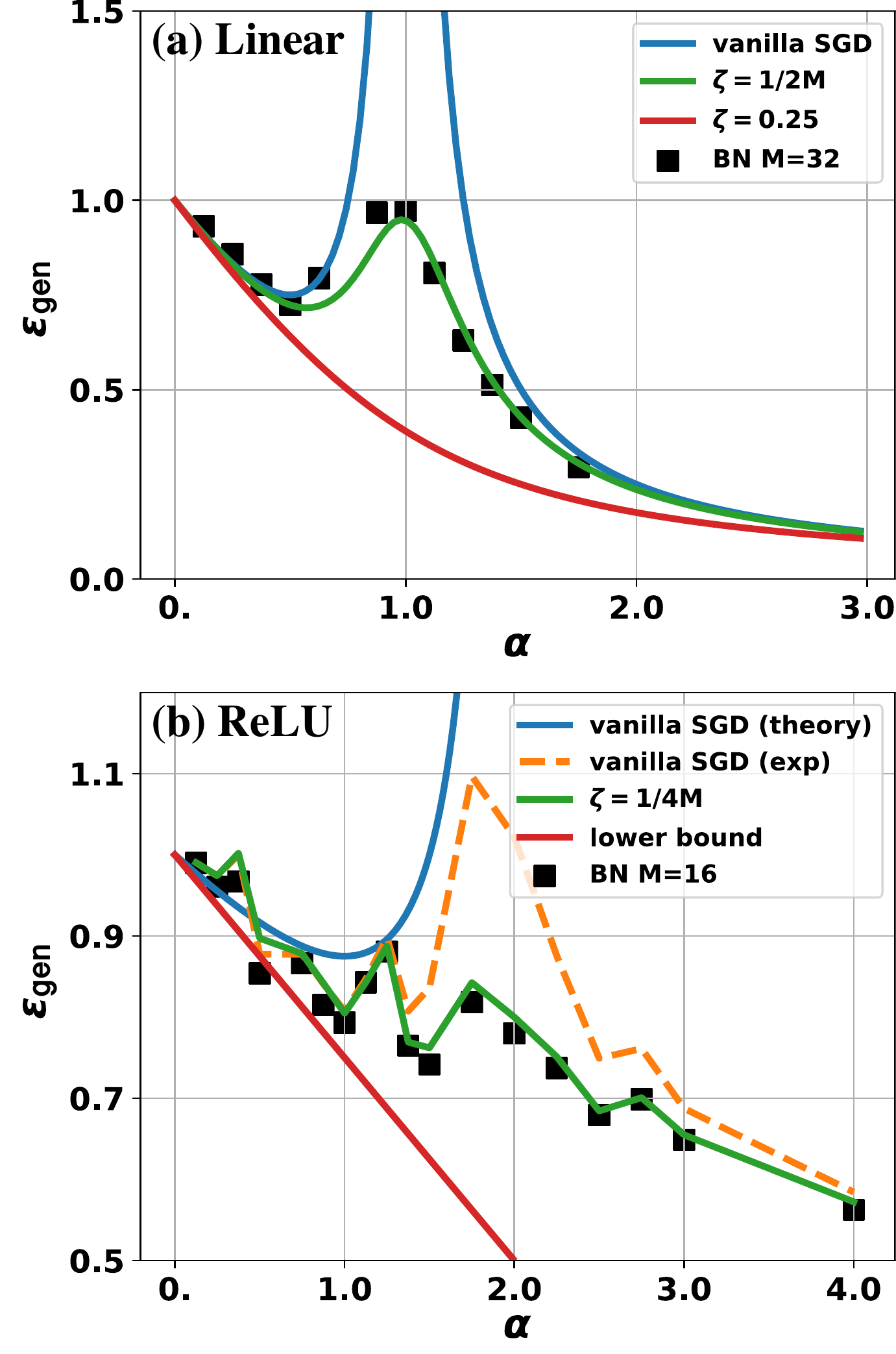}
  \end{center}
  \vspace{-15pt}
  \caption{{\small \textbf{(a)} shows generalization error \vs effective load $\alpha$ using a linear student (identity units). `WN+gamma decay' has two curves $\zeta=\frac{1}{2M}$ and $\zeta=0.25$. BN is trained with $M=32$. \textbf{(b)} shows generalization error \vs effective load $\alpha$ using a ReLU student. `WN+gamma decay' has $\zeta=\frac{1}{4M}$ and is compared to BN with batch size $M=32$. The theoretical curve for vanilla SGD is also shown in blue. The red line is the generalization error of vanilla SGD with no noise in the teacher and thus serves as a lower bound. \vspace{-20pt}
  }}\label{fig:st_loss}\vspace{-10pt}
\end{wrapfigure}
We provide closed-form solutions of the generalization errors (see Appendix \ref{sub:gen}) for vanilla SGD with both linear and ReLU student networks. The theoretical solution of WN+gamma decay can also be solved for the linear student, but still remains difficult for ReLU student whose numerical verification is provided instead.
%
Both vanilla SGD and WN+gamma decay are compared with numerical solutions of BN.

\textbf{vanilla SGD.}
In an identity (linear) student, the solution of generalization error depends on the rank of correlation matrix $\mathbf{\Sigma}=\x\tran\x$. Here we define an effective load $\alpha=P/N$ that is the ratio between number of samples $P$ and number of input neurons $N$ (number of learnable parameters).

The generalization error of the identity student is denoted as $\epsilon_{\mathrm{id}}^\ord$, which can be acquired by using the distribution of eigenvalues of $\mathbf{\Sigma}$ following \citep{advani_high-dimensional_2017}. If $\alpha<1$, $\epsilon_{\mathrm{id}}^\ord=1-\alpha+{\alpha S}{/(1-\alpha)}$. Otherwise, $\epsilon_{\mathrm{id}}^\ord={S}{/(\alpha-1)}$ where
$S$ is the variance of the injected noise to the teacher network.
The values of $\epsilon_{\mathrm{id}}^\ord$ with respect to $\alpha$ are plotted in blue curve of Fig.\ref{fig:st_loss}(a). It first decreases but then increases as $\alpha$ increases from 0 to 1. $\epsilon_{\mathrm{id}}^\ord$ diverges at $\alpha=1$. And it would decrease again when $\alpha>1$.

In a ReLU student,
the nonlinear activation yields difficulty to derive the theoretical solution. Here we utilize the statistical mechanics and calculate that $\epsilon_{\mathrm{relu}}^\ord = 1-\alpha/4+\frac{\alpha S}{2(2-\alpha)}$ and $\alpha<2$ (see Appendix\ref{sub:equi_order}).
When comparing to the lower bound (trained without noisy supervision) shown as the red curve in Fig.\ref{fig:st_loss}(b),
we see that $\epsilon_{\mathrm{relu}}^\ord$ (blue curve) diverges at $\alpha=2$. This is because the student overfits the noise in the teacher's output.
The curve of numerical solution is also plotted in dashed line in Fig.\ref{fig:st_loss}(b) and it captures the diverging trend well.
It should be noted that obtaining the theoretical curve empirically requires an infinitely long time of training and an infinitely small learning rate. This unreachable limit explains the discrepancies between the theoretical and the numerical solution.

\textbf{WN+gamma decay.} In a linear student, the gamma decay term turns the correlation matrix to $\mathbf{\Sigma}=\left(\mathbf{x}\tran\mathbf{x}+\zeta\mathbf{I}\right)$, which is positive definite.
Following statistical mechanics \citep{krogh_generalization_1992}, the generalization error is
$
\epsilon_{\mathrm{id}}^\wn=\delta^{2}\frac{\partial\left(\zeta G\right)}{\partial\zeta}-\zeta^{2}\frac{\partial G}{\partial\zeta}$ {where} $G={1-\alpha-\zeta+{\big(\zeta+(1+\sqrt{\alpha})^{2}\big)^{\frac{1}{2}}
\big(\zeta+(1-\sqrt{\alpha})^{2}\big)^{\frac{1}{2}}}}\big/{2\zeta}.
$
We see that $\epsilon_{\mathrm{id}}^\wn$ can be computed quantitatively given the values of $\zeta$ and $\alpha$.
%
Let the variance of noise injected to the teacher be $0.25$.
%
Fig.\ref{fig:st_loss}(a) shows that no other curves could outperform the red curve when $\zeta=0.25$, a value equal to the noise magnitude.
The $\zeta$ smaller than $0.25$ (green curve $\zeta=\frac{1}{2M}$ and $M=32$) would exhibit overtraining around $\alpha=1$, but they still perform significantly better than vanilla SGD.

For the ReLU student in Fig,\ref{fig:st_loss}(b), a direct solution of the generalization error $\epsilon_{\mathrm{relu}}^\wn$ remains an open problem. Therefore, the numerical results of `WN+gamma decay' (green curve) are run at each $\alpha$ value.
It effectively reduces over-fitting compared to vanilla SGD.

\textbf{Numerical Solutions of BN.}
In the linear student, we employ SGD with $M=32$ to find solutions of $\w$ for BN. 
The number of input neurons is 4096 and the number of training samples can be varied to change $\alpha$. The results are marked as black squares in Fig.\ref{fig:st_loss}(a).
After applying the analyses for linear student (Appendix \ref{app:theorem-id}), BN is equivalent to `WN+gamma decay' when $\zeta=\frac{1}{2M}$ (green curve).
It is seen that BN gets in line with the curve ‘$\zeta=1/2M$’ ($M=32$) and thus quantitatively validates our derivations.

In the ReLU student, the setting is mostly the same as the linear case, except that we employ a smaller batch size $M=16$. The results are shown as black squares in Fig.\ref{fig:st_loss}(b). For ReLU units, the equivalent $\zeta$ of gamma decay is $\zeta=\frac{1}{4M}$. If one compares the generalization error of BN with `WN+gamma decay' (green curve), a clear correspondence is found, which also validates the derivations for the ReLU activation function.

\vspace{-5pt}
\section{Experiments in CNNs}\label{sec:exp}
\vspace{-5pt}

This section shows that BN in CNNs follows similar traits of regularization as the above analyses.


%
To compare different methods, the CNN architectures are fixed while only the normalization layers are changed. We adopt CIFAR10 \citep{cifar} that contains 60k images of 10 categories (50k images for training and 10k images for test).
All models are trained by using SGD with momentum, while the initial learning rates are scaled proportionally \citep{goyal_accurate_2017} when different batch sizes are presented.
%
%
More empirical setting can be found in Appendix \ref{app:exp}.

%
%

\textbf{Evaluation of PN+Gamma Decay}.
This work shows that BN can be decomposed into PN and gamma decay. We empirically compare `PN+gamma decay' with BN by using
ResNet18 \citep{resnet}. For `PN+gamma decay', the population statistics of PN and the decay factor of gamma decay are estimated by using sufficient amount of training samples.
For BN, BN trained with a normal batch size $M=128$ is treated as baseline as shown in Fig.\ref{fig:FIGURE2}(a\&b).
%
We see that when batch size increases, BN would imped both loss and accuracy.
For example, when increasing $M$ to $1024$, performance decreases because the regularization from the batch statistics reduces in large batch, resulting in overtraining (see the gap between train and validation loss in (a) when $M=1024$).

In comparison, we train PN by using 10k training samples to estimate the population statistics. Note that this further reduces regularization. We see that the release of regularization can be complemented by gamma decay, making PN outperformed BN. This empirical result verifies our derivation of regularization for BN.
%
Similar trend can be observed by experiment in a down-sampled version of ImageNet (see Appendix \ref{app:imagenet}).
We would like to point out that `PN+gamma decay' is of interest in theoretical analyses, but it is computation-demanding when applied in practice because evaluating $\mu_\mathcal{P}$, $\sigma_\mathcal{P}$ and $\zeta(h)$ may require sufficiently large number of samples.

\textbf{Comparisons of Regularization.} We study the regulation strengths of vanilla SGD, BN, WN, WN+mean-only BN, and WN+variance-only BN.
At first, the strength of regularization terms from both $\mu_\mathcal{B}$ and $\sigma_\mathcal{B}$ are compared by using a simpler network with 4 convolutional and 2 fully connected layers as used in \citep{WN}.
%
Fig.\ref{fig:FIGURE2}(c\&d) compares their training and validation losses.
We see that the generalization error of BN is much lower than WN and vanilla SGD.
The reason has been disclosed in this work: stochastic behaviors of $\mu_{\mathcal{B}}$ and $\sigma_{\mathcal{B}}$ in BN improves generalization.

To investigate $\mu_\mathcal{B}$ and $\sigma_\mathcal{B}$ individually, we decompose their contributions by running a WN with mean-only BN as well as a WN with variance-only BN, to simulate their respective regularization.
As shown in Fig.\ref{fig:FIGURE2}(c\&d), improvements from the mean-only and the variance-only BN over WN verify our conclusion that noises from $\mu_\mathcal{B}$ and $\sigma_\mathcal{B}$ have different regularization strengths.
Both of them are essential to produce good result.


\textbf{Regularization and parameter norm.}
%
%
We further demonstrate impact of BN to the norm of parameters.
%
We compare BN with vanilla SGD.
A network is first trained by BN in order to converge to a local minima where the parameters do not change much. At this local minima, the weight vector is frozen and denoted as $\mathbf{w}^{\bn}$.
Then this network is finetuned by using vanilla SGD with a small learning rate $10^{-3}$ and its kernel parameters are initialized by $\mathbf{w}^\ord=\gamma\frac{\mathbf{w}^{\bn}}{{\sigma}}$, where ${\sigma}$ is the moving average of $\sigma_{\mathcal{B}}$.

Fig.\ref{fig:cifar_finetune} in Appendix \ref{app:paramnorm} visualizes the results.
As $\mu_{\mathcal{B}}$ and $\sigma_{\mathcal{B}}$ are removed in vanilla SGD, it is found that the training loss decreases while the validation loss increases, implying that reduction in regularization makes the network converged to a sharper local minimum that generalizes less well.
The magnitudes of kernel parameters $\w^\ord$ at different layers are also observed to increase after freezing BN, due to the release of regularization on these parameters.

\begin{figure}[t]
\begin{center}
\includegraphics[width=\textwidth]{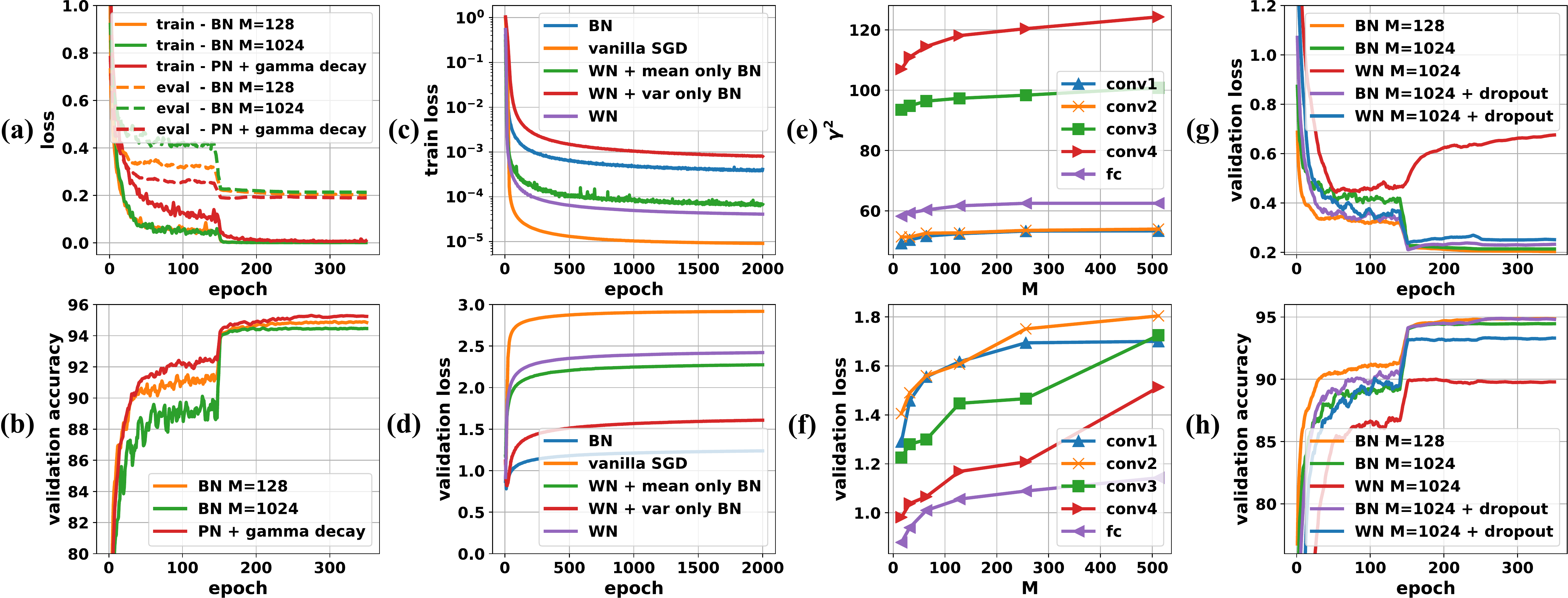}
\end{center}
\vspace{-10pt}
\caption{\small{ (a) \& (b) compare the loss (both training and evaluation) and validation accuracy between BN and PN on CIFAR10 using a ResNet18 network; (c) \& (d) compare the training and validation loss curve with WN + mean-only BN and WN + variance-only BN; (e) \& (f) validate the regularization effect of BN on both $\gamma^2$ and the validation loss with different batch sizes; (g) \& (h) show the loss and top-1 validation accuracy of ResNet18 with additional regularization (dropout) on large-batch training of BN and WN. \vspace{-15pt}}}
\label{fig:FIGURE2}
\end{figure}

\textbf{Batch size.} To study BN with different batch sizes,
%
we train different networks but only add BN at one layer at a time.
The regularization on the $\gamma$ parameter is compared in Fig.\ref{fig:FIGURE2}(e) when BN is located at different layers. The values of $\gamma^2$ increase along with the batch size $M$ due to the weaker regularization for the larger batches. The increase of $\gamma^2$ also makes all validation losses increased as shown in Fig.\ref{fig:FIGURE2}(f).

\textbf{BN and WN trained with dropout}.
%
As PN and gamma decay requires estimating the population statistics that increases computations,
%
%
we utilize dropout as an alternative to improve regularization of BN.
%
We add a dropout after each BN layer.
Fig.\ref{fig:FIGURE2}(g\&h) plot the classification results using ResNet18.
The generalization of BN deteriorates significantly when $M$ increases from 128 to 1024. This is observed by the much higher validation loss (Fig.\ref{fig:FIGURE2}(g)) and lower validation accuracy (Fig.\ref{fig:FIGURE2}(h)) when $M=1024$.
If a dropout layer with ratio $0.1$ is added \emph{after} each residual block layer for $M=1024$ in ResNet18, the validation loss is suppressed and accuracy increased by a great margin. This superficially contradicts with the original claim that BN reduces the need for dropout \citep{BN}. As discussed in Appendix \ref{app:bn-wn-dropout}, we find that there are two differences between our study and previous work \citep{BN}.

%
%
Fig.\ref{fig:FIGURE2}(g\&h) also show that WN can also be regularized by dropout.
%
We apply dropout after each WN layer with ratio 0.2 and the dropout is applied at the same layers as that for BN.
We found that the improvement on both validation accuracy and loss is surprising.
The accuracy increases from 0.90 to 0.93, even close to the results of BN. Nevertheless, additional regularization on WN still cannot make WN on par with the performance BN. In deep neural networks the distribution after each layer would be far from a Gaussian distribution, in which case WN is not a good substitute for PN.

\vspace{-5pt}
\section{Conclusions}
\vspace{-5pt}

This work investigated an explicit regularization form of BN, which
was decomposed into PN and gamma decay where the regularization strengths from $\mu_\mathcal{B}$ and $\sigma_\mathcal{B}$ were explored.
Moreover, optimization and generalization of BN with regularization were derived and compared with vanilla SGD, WN, and WN+gamma decay, showing that BN enables training to converge with large maximum and effective learning rate, as well as leads to better generalization.
%
%
Our analytical results explain many existing empirical phenomena.
Experiments in CNNs showed that BN in deep networks share the same traits of regularization.
%
%
%
In future work, we are interested in analyzing optimization and generalization of BN in deep networks, which is still an open problem.
Moreover, investigating the other normalizers such as instance normalization (IN) \citep{IN} and layer normalization (LN) \citep{LN} is also important.
Understanding the characteristics of these normalizers should be the first step to analyze some recent best practices such as whitening \citep{GWNN,EigenNet}, switchable normalization \citep{SN,SN2,SSN}, and switchable whitening \citep{SW}.


{\small{
\bibliographystyle{iclr2019_conference}
\bibliography{icml,egbib}

\begin{thebibliography}{48}
\providecommand{\natexlab}[1]{#1}
\providecommand{\url}[1]{\texttt{#1}}
\expandafter\ifx\csname urlstyle\endcsname\relax
  \providecommand{\doi}[1]{doi: #1}\else
  \providecommand{\doi}{doi: \begingroup \urlstyle{rm}\Url}\fi

\bibitem[Advani \& Saxe(2017)Advani and Saxe]{advani_high-dimensional_2017}
Madhu~S. Advani and Andrew~M. Saxe.
\newblock High-dimensional dynamics of generalization error in neural networks.
\newblock \emph{arXiv:1710.03667 [physics, q-bio, stat]}, October 2017.
\newblock URL \url{http://arxiv.org/abs/1710.03667}.
\newblock arXiv: 1710.03667.

\bibitem[Ba et~al.(2016)Ba, Kiros, and Hinton]{LN}
Jimmy~Lei Ba, Jamie~Ryan Kiros, and Geoffrey~E. Hinton.
\newblock Layer normalization.
\newblock \emph{arXiv:1607.06450}, 2016.

\bibitem[Bishop(1995)]{bishop_training_1995}
Chris~M. Bishop.
\newblock Training with {Noise} is {Equivalent} to {Tikhonov} {Regularization}.
\newblock \emph{Neural Computation}, 7\penalty0 (1):\penalty0 108--116, January
  1995.
\newblock ISSN 0899-7667, 1530-888X.
\newblock \doi{10.1162/neco.1995.7.1.108}.
\newblock URL
  \url{http://www.mitpressjournals.org/doi/10.1162/neco.1995.7.1.108}.

\bibitem[Brutzkus \& Globerson(2017)Brutzkus and Globerson]{Globally}
Alon Brutzkus and Amir Globerson.
\newblock Globally optimal gradient descent for a convnet with gaussian inputs.
\newblock In \emph{ICML}, 2017.

\bibitem[Bös(1998)]{bos_statistical_1998}
Siegfried Bös.
\newblock Statistical mechanics approach to early stopping and weight decay.
\newblock \emph{Physical Review E}, 58\penalty0 (1):\penalty0 833, 1998.

\bibitem[Bs \& Opper(1998)Bs and Opper]{Dynamics-perceptron}
Siegfried Bs and Manfred Opper.
\newblock Dynamics of batch training in a perceptron.
\newblock In \emph{Journal of Physics A: Mathematical and General}, volume
  31(21), pp.\  4835, 1998.

\bibitem[Fagan \& Iyengar(2018)Fagan and Iyengar]{Robust}
Francois Fagan and Garud Iyengar.
\newblock Robust {Implicit} {Backpropagation}.
\newblock In \emph{arXiv:1808.02433}, 2018.

\bibitem[Gitman \& Ginsburg(2017)Gitman and Ginsburg]{gitman_comparison_2017}
Igor Gitman and Boris Ginsburg.
\newblock Comparison of {Batch} {Normalization} and {Weight} {Normalization}
  {Algorithms} for the {Large}-scale {Image} {Classification}.
\newblock \emph{arXiv:1709.08145 [cs]}, September 2017.
\newblock URL \url{http://arxiv.org/abs/1709.08145}.
\newblock arXiv: 1709.08145.

\bibitem[Glorot \& Bengio(2010)Glorot and Bengio]{diffNN}
Xavier Glorot and Yoshua Bengio.
\newblock Understanding the difficulty of training deep feedforward neural
  networks.
\newblock In \emph{AISTATS}, 2010.

\bibitem[Goyal et~al.(2017)Goyal, Dollár, Girshick, Noordhuis, Wesolowski,
  Kyrola, Tulloch, Jia, and He]{goyal_accurate_2017}
Priya Goyal, Piotr Dollár, Ross Girshick, Pieter Noordhuis, Lukasz Wesolowski,
  Aapo Kyrola, Andrew Tulloch, Yangqing Jia, and Kaiming He.
\newblock Accurate, {Large} {Minibatch} {SGD}: {Training} {ImageNet} in 1
  {Hour}.
\newblock \emph{arXiv preprint arXiv:1706.02677}, 2017.

\bibitem[He et~al.(2016)He, Zhang, Ren, and Sun]{resnet}
Kaiming He, Xiangyu Zhang, Shaoqing Ren, and Jian Sun.
\newblock Deep residual learning for image recognition.
\newblock In \emph{CVPR}, 2016.

\bibitem[Huang et~al.(2017)Huang, Liu, van~der Maaten, and
  Weinberger]{densenet}
Gao Huang, Zhuang Liu, Laurens van~der Maaten, and Kilian~Q. Weinberger.
\newblock Densely connected convolutional networks.
\newblock In \emph{CVPR}, 2017.

\bibitem[Ioffe \& Szegedy(2015)Ioffe and Szegedy]{BN}
Sergey Ioffe and Christian Szegedy.
\newblock Batch normalization: Accelerating deep network training by reducing
  internal covariate shift.
\newblock In \emph{ICML}, 2015.

\bibitem[Kawaguchi(2016)]{wopoor}
Kenji Kawaguchi.
\newblock Deep learning without poor local minima.
\newblock In \emph{NIPS}, 2016.

\bibitem[Krizhevsky(2009)]{cifar}
Alex Krizhevsky.
\newblock Learning multiple layers of features from tiny images.
\newblock In \emph{Technical Report}, 2009.

\bibitem[Krizhevsky et~al.(2012)Krizhevsky, Sutskever, and Hinton]{AlexNet}
Alex Krizhevsky, Ilya Sutskever, and Geoffrey~E. Hinton.
\newblock Imagenet classification with deep convolutional neural networks.
\newblock In \emph{NIPS}, 2012.

\bibitem[Krogh \& Hertz(1992)Krogh and Hertz]{krogh_generalization_1992}
Anders Krogh and John~A. Hertz.
\newblock Generalization in a linear perceptron in the presence of noise.
\newblock \emph{Journal of Physics A: Mathematical and General}, 25\penalty0
  (5):\penalty0 1135, 1992.

\bibitem[Li et~al.(2018)Li, Chen, Hu, and Yang]{Disharmony}
Xiang Li, Shuo Chen, Xiaolin Hu, and Jian Yang.
\newblock Understanding the disharmony between dropout and batch normalization
  by variance shift.
\newblock In \emph{arXiv:1801.05134}, 2018.

\bibitem[Loshchilov \& Hutter(2016)Loshchilov and Hutter]{SGDR}
Ilya Loshchilov and Frank Hutter.
\newblock Sgdr: Stochastic gradient descent with warm restarts.
\newblock In \emph{arXiv:1608.03983}, 2016.

\bibitem[Luo(2017{\natexlab{a}})]{EigenNet}
Ping Luo.
\newblock Eigennet: Towards fast and structural learning of deep neural
  networks.
\newblock \emph{IJCAI}, 2017{\natexlab{a}}.

\bibitem[Luo(2017{\natexlab{b}})]{GWNN}
Ping Luo.
\newblock Learning deep architectures via generalized whitened neural networks.
\newblock \emph{ICML}, 2017{\natexlab{b}}.

\bibitem[Luo et~al.(2018)Luo, Peng, Ren, and Zhang]{SN2}
Ping Luo, Zhanglin Peng, Jiamin Ren, and Ruimao Zhang.
\newblock Do normalization layers in a deep convnet really need to be distinct?
\newblock \emph{arXiv:1811.07727}, 2018.

\bibitem[Luo et~al.(2019)Luo, Ren, Peng, Zhang, and Li]{SN}
Ping Luo, Jiamin Ren, Zhanglin Peng, Ruimao Zhang, and Jingyu Li.
\newblock Differentiable learning-to-normalize via switchable normalization.
\newblock \emph{ICLR}, 2019.

\bibitem[Mandt et~al.(2017)Mandt, Hoffman, and Blei]{mandt_stochastic_2017}
Stephan Mandt, Matthew~D. Hoffman, and David~M. Blei.
\newblock Stochastic {Gradient} {Descent} as {Approximate} {Bayesian}
  {Inference}.
\newblock \emph{arXiv:1704.04289 [cs, stat]}, April 2017.
\newblock URL \url{http://arxiv.org/abs/1704.04289}.
\newblock arXiv: 1704.04289.

\bibitem[Mei et~al.(2016)Mei, Bai, and Montanari]{landscape}
Song Mei, Yu~Bai, and Andrea Montanari.
\newblock The landscape of empirical risk for non-convex losses.
\newblock In \emph{arXiv:1607.06534}, 2016.

\bibitem[Morcos et~al.(2018)Morcos, Barrett, Rabinowitz, and Botvinick]{single}
Ari~S. Morcos, David~G.T. Barrett, Neil~C. Rabinowitz, and Matthew Botvinick.
\newblock On the importance of single directions for generalization.
\newblock In \emph{ICLR}, 2018.

\bibitem[Opper et~al.(1990)Opper, Kinzel, Kleinz, and Nehl]{optimal-perceptron}
M.~Opper, W.~Kinzel, J.~Kleinz, and R.~Nehl.
\newblock On the ability of the optimal perceptron to generalise.
\newblock In \emph{Journal of Physics A: Mathematical and General}, volume
  23(11), pp.\  581, 1990.

\bibitem[Pan et~al.(2019)Pan, Zhan, Shi, Tang, and Luo]{SW}
Xingang Pan, Xiaohang Zhan, Jianping Shi, Xiaoou Tang, and Ping Luo.
\newblock Switchable whitening for deep representation learning.
\newblock In \emph{arXiv:1904.09739}, 2019.

\bibitem[Pennington \& Bahri(2017)Pennington and Bahri]{Geometry}
Jeffrey Pennington and Yasaman Bahri.
\newblock Geometry of neural network loss surfaces via random matrix theory.
\newblock In \emph{ICML}, 2017.

\bibitem[Raghu et~al.(2017)Raghu, Poole, Kleinberg, Ganguli, and
  Dickstein]{Expressive}
Maithra Raghu, Ben Poole, Jon Kleinberg, Surya Ganguli, and Jascha~Sohl
  Dickstein.
\newblock On the expressive power of deep neural networks.
\newblock In \emph{ICML}, 2017.

\bibitem[Rifai et~al.(2011)Rifai, Glorot, Bengio, and
  Vincent]{rifai_adding_2011}
Salah Rifai, Xavier Glorot, Yoshua Bengio, and Pascal Vincent.
\newblock Adding noise to the input of a model trained with a regularized
  objective.
\newblock \emph{arXiv:1104.3250 [cs]}, April 2011.
\newblock URL \url{http://arxiv.org/abs/1104.3250}.
\newblock arXiv: 1104.3250.

\bibitem[Russakovsky et~al.(2015)Russakovsky, Deng, Su, Krause, Satheesh, Ma,
  Huang, Karpathy, Khosla, Bernstein, Berg, and Fei-Fei]{imagenet12}
Olga Russakovsky, Jia Deng, Hao Su, Jonathan Krause, Sanjeev Satheesh, Sean Ma,
  Zhiheng Huang, Andrej Karpathy, Aditya Khosla, Michael Bernstein,
  Alexander~C. Berg, and Li~Fei-Fei.
\newblock Imagenet large scale visual recognition challenge.
\newblock In \emph{ICJV}, 2015.

\bibitem[Saad \& Solla(1996)Saad and Solla]{on-line-committe}
David Saad and Sara~A. Solla.
\newblock Dynamics of on-line gradient descent learning for multilayer neural
  networks.
\newblock In \emph{NIPS}, 1996.

\bibitem[Salimans \& Kingma(2016)Salimans and Kingma]{WN}
Tim Salimans and Diederik~P. Kingma.
\newblock Weight normalization: A simple reparameterization to accelerate
  training of deep neural networks.
\newblock In \emph{arXiv:1602.07868}, 2016.

\bibitem[Santurkar et~al.(2018)Santurkar, Tsipras, Ilyas, and
  Madry]{santurkar_how_2018}
Shibani Santurkar, Dimitris Tsipras, Andrew Ilyas, and Aleksander Madry.
\newblock How {Does} {Batch} {Normalization} {Help} {Optimization}?
\newblock \emph{arXiv:1805.11604 [cs, stat]}, May 2018.
\newblock URL \url{http://arxiv.org/abs/1805.11604}.
\newblock arXiv: 1805.11604.

\bibitem[Seung et~al.(1992)Seung, Sompolinsky, and
  Tishby]{seung_statistical_1992}
H.~S. Seung, Haim Sompolinsky, and N.~Tishby.
\newblock Statistical mechanics of learning from examples.
\newblock \emph{Physical Review A}, 45\penalty0 (8):\penalty0 6056, 1992.
\newblock URL
  \url{https://journals.aps.org/pra/abstract/10.1103/PhysRevA.45.6056}.

\bibitem[Shao et~al.(2019)Shao, Meng, Li, Zhang, Li, Wang, and Luo]{SSN}
Wenqi Shao, Tianjian Meng, Jingyu Li, Ruimao Zhang, Yudian Li, Xiaogang Wang,
  and Ping Luo.
\newblock Ssn: Learning sparse switchable normalization via sparsestmax.
\newblock In \emph{CVPR}, 2019.

\bibitem[Srivastava et~al.(2014)Srivastava, Hinton, Krizhevsky, Sutskever, and
  Salakhutdinov]{}
Nitish Srivastava, Geoffrey Hinton, Alex Krizhevsky, Ilya Sutskever, and Ruslan
  Salakhutdinov.
\newblock Dropout: A simple way to prevent neural networks from overfitting.
\newblock In \emph{Journal of Machine Learning Research}, 2014.

\bibitem[Szegedy et~al.(2015)Szegedy, Vanhoucke, Ioffe, Shlens, and
  Wojna]{szegedy_rethinking_2015}
Christian Szegedy, Vincent Vanhoucke, Sergey Ioffe, Jonathon Shlens, and
  Zbigniew Wojna.
\newblock Rethinking the {Inception} {Architecture} for {Computer} {Vision}.
\newblock \emph{arXiv:1512.00567 [cs]}, December 2015.
\newblock URL \url{http://arxiv.org/abs/1512.00567}.
\newblock arXiv: 1512.00567.

\bibitem[Teye et~al.(2018)Teye, Azizpour, and Smith]{BayeUnEs}
Mattias Teye, Hossein Azizpour, and Kevin Smith.
\newblock Bayesian uncertainty estimation for batch normalized deep networks.
\newblock In \emph{ICML}, 2018.

\bibitem[Tian(2017)]{Yuandong}
Yuandong Tian.
\newblock An analytical formula of population gradient for two-layered relu
  network and its applications in convergence and critical point analysis.
\newblock In \emph{ICML}, 2017.

\bibitem[Ulyanov et~al.(2016)Ulyanov, Vedaldi, and Lempitsky]{IN}
Dmitry Ulyanov, Andrea Vedaldi, and Victor Lempitsky.
\newblock Instance normalization: The missing ingredient for fast stylization.
\newblock \emph{arXiv:1607.08022}, 2016.

\bibitem[van Laarhoven(2017)]{l2BN}
Twan. van Laarhoven.
\newblock L2 regularization versus batch and weight normalization.
\newblock In \emph{arXiv:1706.05350}, 2017.

\bibitem[Wager et~al.(2013)Wager, Wang, and Liang]{wager_dropout_2013}
Stefan Wager, Sida Wang, and Percy Liang.
\newblock Dropout {Training} as {Adaptive} {Regularization}.
\newblock \emph{arXiv:1307.1493 [cs, stat]}, July 2013.
\newblock URL \url{http://arxiv.org/abs/1307.1493}.
\newblock arXiv: 1307.1493.

\bibitem[Wu \& He(2018)Wu and He]{GN}
Yuxin Wu and Kaiming He.
\newblock Group normalization.
\newblock \emph{arXiv:1803.08494}, 2018.

\bibitem[Yoshida et~al.(2017)Yoshida, Karakida, Okada, and ichi
  Amari]{WNdynamic}
Yuki Yoshida, Ryo Karakida, Masato Okada, and Shun ichi Amari.
\newblock Statistical mechanical analysis of online learning with weight
  normalization in single layer perceptron.
\newblock In \emph{Journal of the Physical Society of Japan}, 2017.

\bibitem[Zhang et~al.(2017{\natexlab{a}})Zhang, Bengio, Hardt, Recht, , and
  Vinyals]{rethinking-generalization}
Chiyuan Zhang, Samy Bengio, Moritz Hardt, Benjamin Recht, , and Oriol Vinyals.
\newblock Understanding deep learning requires rethinking generalization.
\newblock In \emph{ICLR}, 2017{\natexlab{a}}.

\bibitem[Zhang et~al.(2017{\natexlab{b}})Zhang, Panigrahy, and
  Sachdeva]{Electron-proton}
Qiuyi Zhang, Rina Panigrahy, and Sushant. Sachdeva.
\newblock Electron-proton dynamics in deep learning.
\newblock In \emph{arXiv:1702.00458}, 2017{\natexlab{b}}.

\end{thebibliography}
}}

\newpage

\appendix
\section*{Appendices}
\addcontentsline{toc}{section}{Appendices}
\renewcommand{\thesubsection}{\Alph{subsection}}


\subsection{Notations}

\begin{table}[h]
\centering
\small
\vspace{-5pt}
\caption{{Several notations are summarized for reference.
}} \label{tab:notation}
\vspace{2pt}
\begin{tabular}{c|c}
\hline
 $\mu_\mathcal{B},\sigma_\mathcal{B}^2$ & batch mean, batch variance\\
 $\mu_\mathcal{P},\sigma_\mathcal{P}^2$ & population mean, population variance\\
 $\x,y$ & input of a network, output of a network\\
 $y^\ast$ & ground truth of an output\\
 $h,\hat{h}$ & hidden value before and after BN\\
 $\bar{h}$ & hidden value after population normalization\\
 $\gamma,\beta$ & scale parameter, shift parameter\\
 $g(\cdot)$ & activation function\\
 $\w,\w^\ast$ & weight vector, ground truth weight vector\\
 $\sw$ & normalized weight vector\\
 $M,N,P$ &batch size, number of neurons, sample size\\
 $\alpha$ & an effective load value $\alpha=P/N$ \\
 $\zeta$ & regularization strength (coefficient)\\
 $\rho$ & Kurtosis of a distribution\\
 $\delta$ & gradient of the activation function\\
 $\eta_\eff,\eta_{\max}$ & effective, maximum learning rate\\
 $R$ & overlapping ratio (angle) between $\sw$ and $\w^\ast$\\
 $L$ & norm (length) of $\w$\\
 $\lambda_{\max},\lambda_{\min}$ & maximum, minimum eigenvalue\\
 $\epsilon_{\mathrm{gen}}$ & generalization error\\
 \hline
\end{tabular}
\end{table}

\subsection{More Empirical Settings and Results}\label{app:exp}

All experiments in Sec.\ref{sec:exp} are conducted in CIFAR10 by using ResNet18 and a CNN architecture similar to \citep{WN} that is summarized
as
`{conv}(3,32)-conv(3,32)-conv(3,64)-conv(3,64)-pool(2,2)-fc(512)-fc(10)',
where `conv(3,32)' represents a convolution with kernel size 3 and
32 channels, `pool(2,2)' is max-pooling with kernel size 2 and stride
2, and `fc' indicates a full connection.
%
%
We follow a configuration for training by using SGD with a momentum value of 0.9 and continuously
decaying the learning rate by a factor of $10^{-4}$ each step. For
different batch sizes, the initial learning rate is scaled proportionally
with the batch size to maintain a similar learning dynamics \citep{goyal_accurate_2017}.


\subsubsection{Results in downsampled ImageNet}\label{app:imagenet}

Besides CIFAR10, we also evaluate `PN+gamma decay' by employing a downsampled version of ImageNet \citep{SGDR}, which contains identical 1.2 million data and 1k categories as the original ImageNet, but each image is scaled to 32$\times$32.
%
We train ResNet18 in downsampled ImageNet by following the training protocol used in \citep{resnet}.
In particular, ResNet18 is trained by using SGD with momentum of 0.9 and the initial learning rate is 0.1, which is then decayed by a factor of 10 after 30, 60, and 90 training epochs.

In downsampled ImageNet, we observe similar trends as those presented in CIFAR10. For example, we see that BN would imped both loss and accuracy when batch size increases.
When increasing $M$ to $1024$ as shown in Fig.\ref{fig:PN_imagenet}, both the loss and validation accuracy decrease because the regularization from the random batch statistics reduces in large batch size, resulting in overtraining. This can be seen by the gap between the training and the validation loss.
Nevertheless, we see that the reduction of regularization can be complemented when PN is trained with adaptive gamma decay, which makes PN performed comparably to BN in downsampled ImageNet.

\begin{figure}[h]
\centering
\subfigure[Comparisons of train and validation loss.]{\label{fig:imagenet_a}\includegraphics[width=63mm]{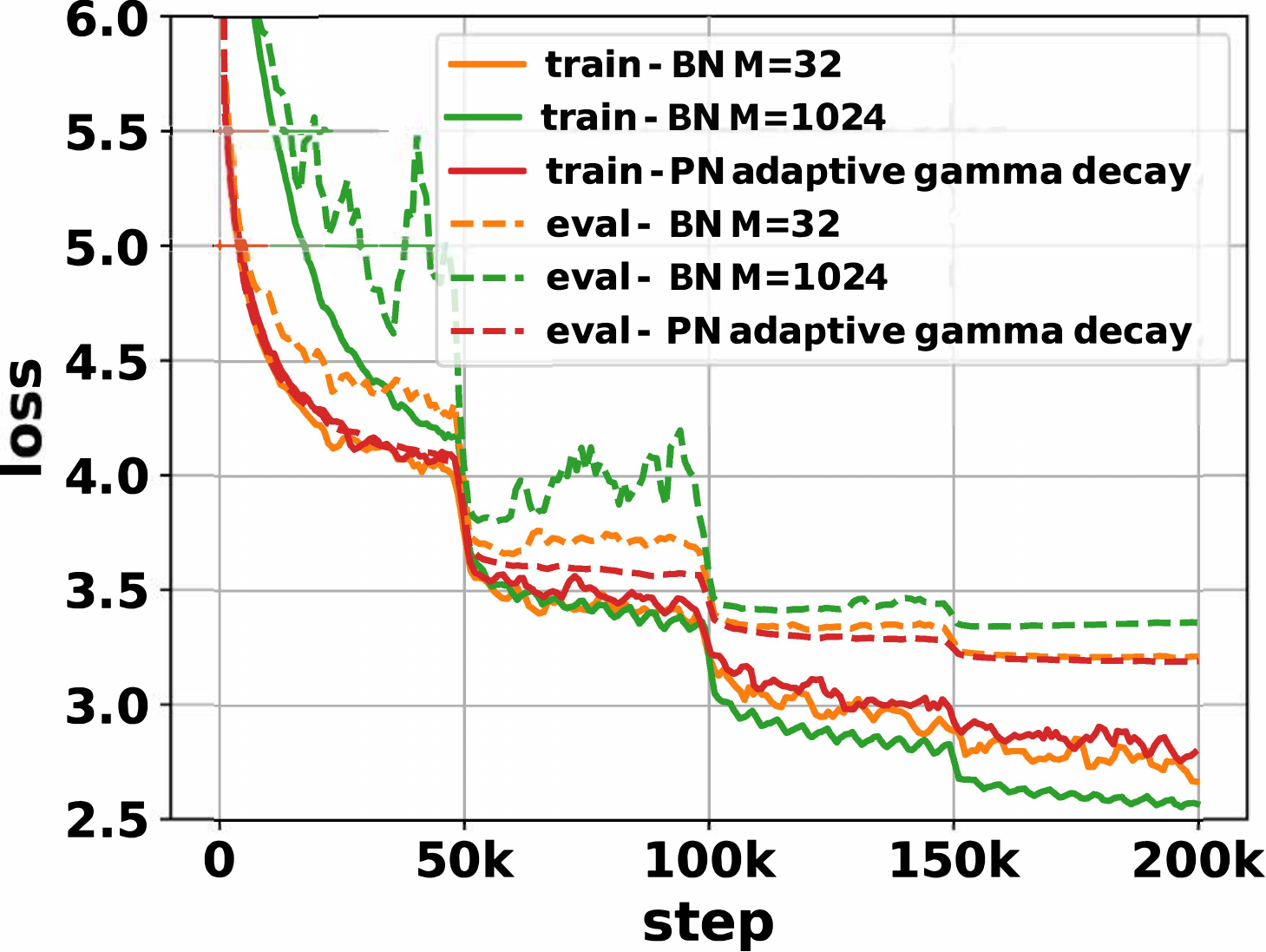}}
\hspace{20pt}
\subfigure[Comparisons of validation accuracy.]{\label{fig:imagenet-b}\includegraphics[width=61mm]{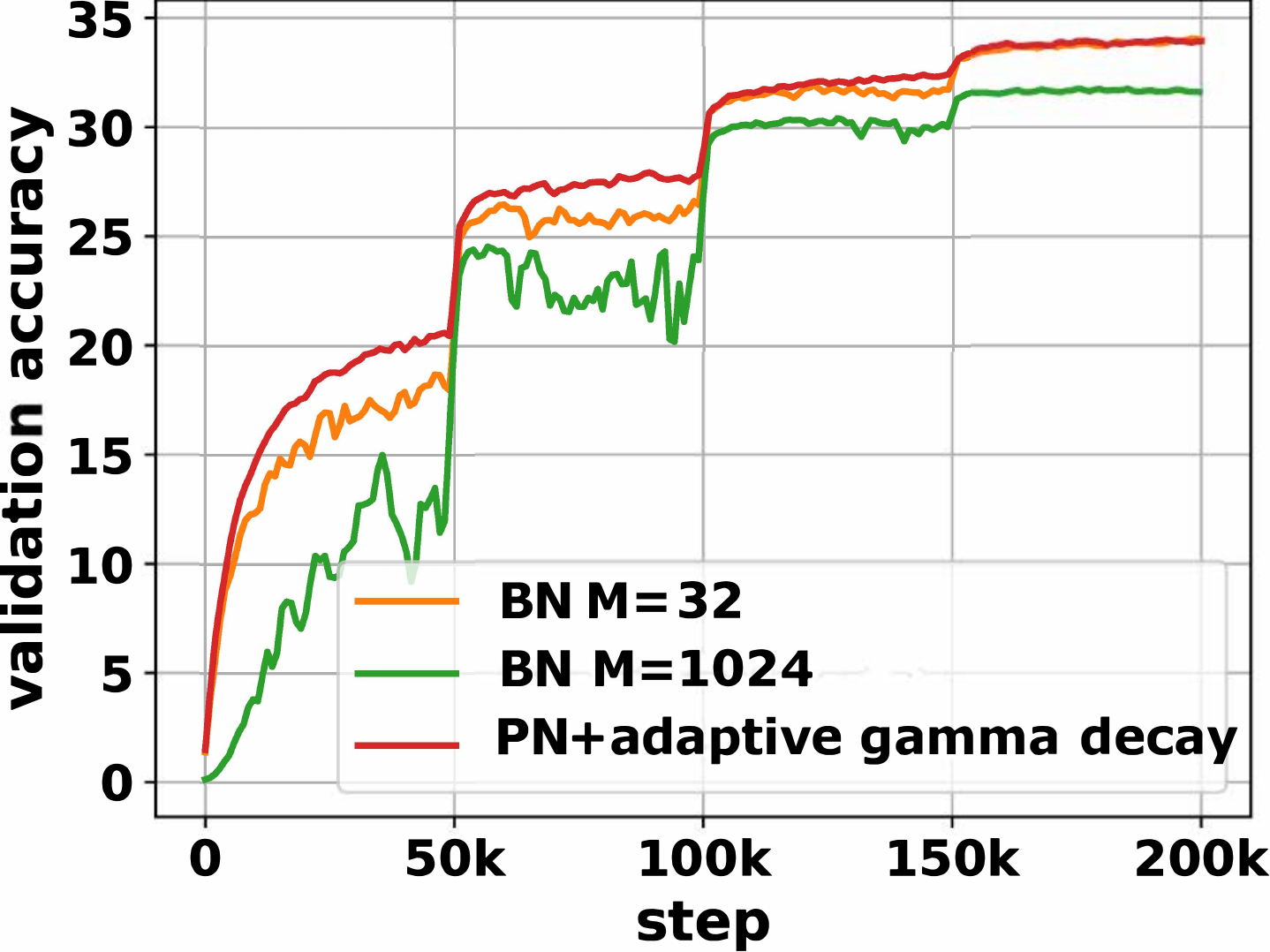}}
   \caption{{\small \textbf{Results of downsampled ImageNet.} (a) plots training and evaluation loss. (b) shows validation accuracy. The models are trained on 8 GPUs.}
   }
\label{fig:PN_imagenet}
\end{figure}

\subsubsection{Impact of BN to the Norm of Parameters}\label{app:paramnorm}

We demonstrate the impact of BN to the norm of parameters.
%
We compare BN with vanilla SGD, where
a network is first trained by BN in order to converge to a local minima when the parameters do not change much. At this local minima, the weight vector is frozen and denoted as $\mathbf{w}^{\bn}$.
Then this network is finetuned by using vanilla SGD with a small learning rate $10^{-3}$ with the kernel parameters initialized by $\mathbf{w}^\ord=\gamma\frac{\mathbf{w}^{\bn}}{{\sigma}}$, where ${\sigma}$ is the moving average of $\sigma_{\mathcal{B}}$.

Fig.\ref{fig:cifar_finetune} below visualizes the results.
As $\mu_{\mathcal{B}}$ and $\sigma_{\mathcal{B}}$ are removed in the vanilla SGD, it is found from the last two figures that the training loss decreases while the validation loss increases, meaning that the reduction in regularization makes the network converged to a sharper local minimum that generalizes less well.
The magnitudes of kernel parameters $\w^\ord$ at different layers are also displayed in the first four figures. All of them increase after freezing BN, due to the release of regularization on these parameters.

\begin{figure}[h]
    \begin{center}
        \includegraphics[width=1.0\textwidth]{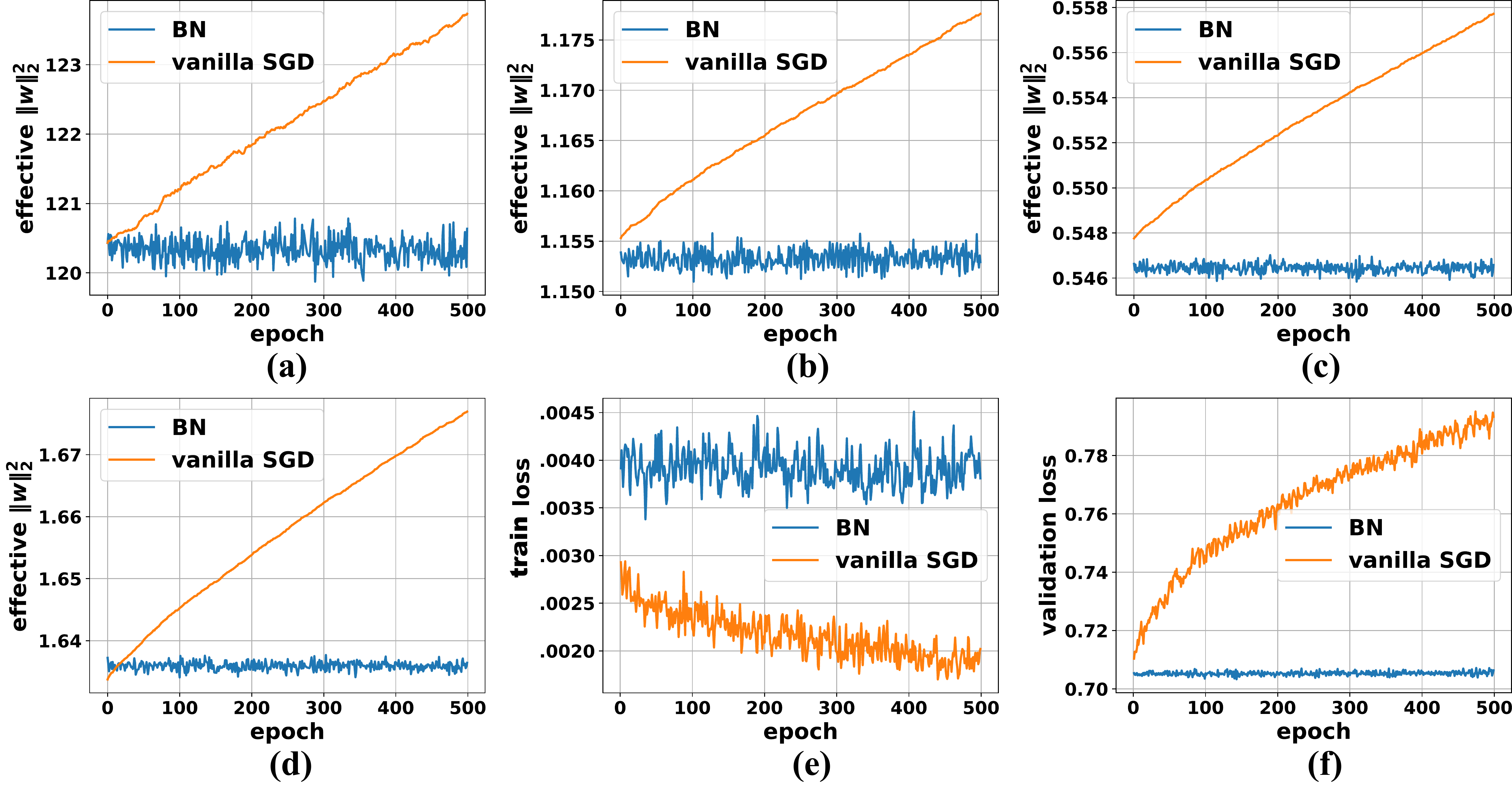}
    \end{center}
    \caption{\small{\textbf{Study of parameter norm.} Vanilla SGD is finetuned from a network pretrained by BN on CIFAR10. The first four figures show the magnitude of the kernel parameters in different layers in finetuning, compared to the effective norm of BN defined as $\gamma\frac{\|\w\|}{\sigma_\mathcal{B}}$. The last two figures compare the training and validation losses in finetuning.}}
    \label{fig:cifar_finetune}
\end{figure}

\subsubsection{BN and WN with dropout}\label{app:bn-wn-dropout}

\textbf{BN+dropout}. Despite the better generalization of BN with smaller batch sizes, large-batch training is more efficient in real cases. Therefore, improving generalization of BN with large batch is more desiring.
However, gamma decay requires estimating the population statistics that increases computations. We also found that treating the decay factor as a constant hardly improves generalization for large batch.
%
%
Therefore, we utilize dropout as an alternative to compensate for the insufficient regularization. Dropout has also been analytically viewed as a regularizer \citep{wager_dropout_2013}.
%
We add a dropout after each BN layer to impose regularization.

Fig.\ref{fig:FIGURE2}(g\&h) in the main paper plot the classification results using ResNet18.
The generalization of BN deteriorates significantly when $M$ increases from 128 to 1024. This is observed by the much higher validation loss (Fig.\ref{fig:FIGURE2}(g)) and lower validation accuracy (Fig.\ref{fig:FIGURE2}(h)) when $M=1024$.
If a dropout layer with ratio $0.1$ is added \emph{after} each residual block layer for $M=1024$ in ResNet18, the validation loss is suppressed and accuracy increased by a great margin. This superficially contradicts with the original claim that BN reduces the need for dropout \citep{BN}. We find that there are two differences between our study and \citep{BN}.

First, in pervious study the batch size was fixed at a quite small value (\eg 32), at which the regularization was already quite strong. Therefore, an additional dropout could not further cause better regularization, but on the contrary increases the instability in training and yields a lower accuracy.
However, our study explores relatively large batch that degrades the regularization of BN, and thus dropout with a small ratio can complement.
Second, usual trials put dropout before BN and cause BN to have different variances during training and test. In contrast, dropout follows BN in this study and the distance between two dropout layers is large (a residual block separation), thus the problem can be alleviated. The improvement by applying dropout after BN has also been observed by a recent work \citep{Disharmony}.

\textbf{WN+dropout}. Since BN can be treated as WN trained with regularization as shown in this study, combining WN with regularization should be able to match the performance of BN.
%
As WN outperforms BN in running speed (without calculating statistics) and it suits better in RNNs than BN, an improvement of its generalization is also of great importance.
Fig.\ref{fig:FIGURE2}(g\&h) also show that WN can also be regularized by dropout.
%
We apply dropout after each WN layer with ratio 0.2 and the dropout is applied at the same layers as that for BN.
We found that the improvement on both validation accuracy and loss is surprising.
The accuracy increases from 0.90 to 0.93, even close to the results of BN. Nevertheless, additional regularization on WN still cannot make WN on par with the performance BN. In deep neural networks the distribution after each layer would be far from a Gaussian distribution, in which case WN is not a good substitute for PN. A potential substibute of BN would require us for designing better estimations of the distribution to improve the training speed and performance of deep networks.


\subsection{Proof of Results}

\subsubsection{Proof of Eqn.\eqref{eq:theorem1}}\label{app:theorem}

\begin{customthm}{1} [Regularization of $\mu_\mathcal{B},\sigma_\mathcal{B}$]\label{theorem:reg_app}
Let a single-layer perceptron with BN and ReLU activation function be defined by $y=\max(0,\hh),~\hh=\gamma\frac{h-\mu_\mathcal{B}}{\sigma_\mathcal{B}}+\beta~\mathrm{and}~h=\w\tran\x$, where $\x$ and $y$ are the network input and output respectively, $h$ and $\hat{h}$ are the hidden values before and after batch normalization, and $\w$ is the weight vector.
%
Let $\ell(\hat{h})$ be the loss function.
Then
\begin{eqnarray} 
\frac{1}{P}\sum_{j=1}^P\mathbb{E}_{\mu_{\mathcal{B}},\sigma_{\mathcal{B}}}\ell(\hat{h}^j)\simeq
\frac{1}{P}\sum_{j=1}^P\ell(\bar{h}^j)+\zeta(h)\gamma^2
~~\mathrm{and}~~\zeta(h)={\frac{\rho+2}{8M}\mathcal{I}(\gamma)}+{\frac{1}{2M}\frac{1}{P}\sum_{j=1}^P\sigma(\bar{h}^j)},\nonumber
\end{eqnarray}
where $\bar{h}^j=\gamma\frac{\w\tran\x^j-\mu_{\mathcal{P}}}{\sigma_{\mathcal{P}}}+\beta$ represents population normalization (PN), $\zeta(h)\gamma^2$ represents gamma decay and $\zeta(h)$ is a data-dependent decay factor. $\rho$ is the kurtosis of the distribution of $h$, $\mathcal{I}(\gamma)$ is an estimation of the Fisher information of $\gamma$ and $\mathcal{I}(\gamma)=\frac{1}{P}\sum_{j=1}^P(\frac{\partial\ell(\hat{h}^j)}{\partial\gamma})^2$, and $\sigma(\cdot)$ is a sigmoid function.
\end{customthm}

\begin{proof}
We have $\hat{h}^j=\gamma\frac{\mathbf{w}^{T}\mathbf{x}^j-\mu_{\mathcal{B}}}{\sigma_{\mathcal{B}}}+\beta$
and $\bar{h}^j=\gamma\frac{\mathbf{w}^{T}\mathbf{x}^j-\mu_{\mathcal{P}}}{\sigma_{\mathcal{P}}}+\beta$.
We prove theorem \ref{theorem:reg_app} by performing a Taylor expansion on a function
$A(\hat{h}^j)$ at $\bar{h}^j$, where $A(\hat{h}^j)$ is a function of $\hat{h}^j$ defined according to a particular activation function. The negative log likelihood function of the above single-layer perceptron can be generally defined as $-\log p(y^j|\hat{h}^j)=A(\hat{h}^j)-y^j\hat{h}^j$, which is similar to the loss function of the generalized linear models with different activation functions.
Therefore, we have
\begin{align*}
\frac{1}{P}\sum_{j=1}^{P}\mathbb{E}_{\mu_{\mathcal{B}},\sigma_{\mathcal{B}}}[l(\hat{h}^{j})] & =\frac{1}{P}\sum_{j=1}^{P}\mathbb{E}_{\mu_{\mathcal{B}},\sigma_{\mathcal{B}}}\left[A(\hat{h}^{j})-y^{j}\hat{h}^{j}\right]\\
 & =\frac{1}{P}\sum_{j=1}^{P}(A(\bar{h}^j)-y^{j}\bar{h}^j)+\frac{1}{P}\sum_{j=1}^{P}\mathbb{E}_{\mu_{\mathcal{B}},\sigma_{\mathcal{B}}}\left[-y^{j}(\hat{h}^{j}-\bar{h}^j)+A(\hat{h}^{j})-A(\bar{h}^j)\right]\\
 & =\frac{1}{P}\sum_{j=1}^{P}l(\bar{h}^j)+\frac{1}{P}\sum_{j=1}^{P}\mathbb{E}_{\mu_{\mathcal{B}},\sigma_{\mathcal{B}}}\left[(A^{\prime}(\bar{h}^j)-y^{j})(\hat{h}^{j}-\bar{h}^j)\right]\\
 & +\frac{1}{P}\sum_{j=1}^{P}\mathbb{E}_{\mu_{\mathcal{B}},\sigma_{\mathcal{B}}}\left[\frac{A^{\prime\prime}(\bar{h}^j)}{2}(\hat{h}^{j}-\bar{h}^j)^{2}\right]\\
 & =\frac{1}{P}\sum_{j=1}^{P}l(\bar{h}^j)+R^{f}+R^{q},
\end{align*}
where $A^{\prime}(\cdot)$ and $A^{\prime\prime}(\cdot)$ denote the
first and second derivatives of function $A(\cdot)$. The first and
second order terms in the expansion are represented by $R^{f}$ and
$R^{q}$ respectively. To derive the analytical forms of $R^{f}$
and $R^{q}$, we take a second-order Taylor expansion of of $\frac{1}{\sigma_{\mathcal{B}}}$
and $\frac{1}{\sigma_{\mathcal{B}}^{2}}$ around $\sigma_{P}$, it
suffices to have
\[
\frac{1}{\sigma_{\mathcal{B}}}\approx\frac{1}{\sigma_{\mathcal{P}}}+(-\frac{1}{\sigma_{\mathcal{P}}^{2}})(\sigma_{\mathcal{B}}-\sigma_{\mathcal{P}})+\frac{1}{\sigma_{\mathcal{P}}^{3}}(\sigma_{\mathcal{B}}-\sigma_{\mathcal{P}})^{2}
\]
and
\[
\frac{1}{\sigma_{\mathcal{B}}^{2}}\approx\frac{1}{\sigma_{\mathcal{P}}^{2}}+(-\frac{2}{\sigma_{\mathcal{P}}^{3}})(\sigma_{\mathcal{B}}-\sigma_{\mathcal{P}})+\frac{3}{\sigma_{\mathcal{P}}^{4}}(\sigma_{\mathcal{B}}-\sigma_{\mathcal{P}})^{2}.
\]

By applying the distributions of $\mu_{\mathcal{B}}$ and $\sigma_{\mathcal{B}}$ introduced in section \ref{sec:view}, we have $\mu_{\mathcal{B}}\sim\mathcal{N}(\mu_{\mathcal{P}},\frac{\sigma_{P}^{2}}{M})$ and $\sigma_{\mathcal{B}}\sim\mathcal{N}(\sigma_{P},\frac{\rho+2}{4M})$. Hence, $R^{f}$ can be derived as
in the paper, $R^{f}$ can be derived as
\begin{align*}
R^{f} & =\frac{1}{P}\sum_{j=1}^{P}\mathbb{E}_{\mu_{\mathcal{B}},\sigma_{\mathcal{B}}}\left[(A^{\prime}(\bar{h}^j)-y^{j})(\hat{h}^{j}\bar{h}^j)\right]\\
 & =\frac{1}{P}\sum_{j=1}^{P}\mathbb{E}_{\mu_{\mathcal{B}},\sigma_{\mathcal{B}}}\left[(A^{\prime}(\bar{h}^j)-y^{j})\left(\gamma\frac{\mathbf{w}^{T}\mathbf{x}^{j}-\mu_{\mathcal{B}}}{\sigma_{\mathcal{B}}}-\gamma\frac{\mathbf{w}^{T}\mathbf{x}^{j}-\mu_{\mathcal{P}}}{\sigma_{\mathcal{P}}}\right)\right]\\
 & =\frac{1}{P}\sum_{j=1}^{P}\mathbb{E}_{\mu_{\mathcal{B}},\sigma_{\mathcal{B}}}\left[(A^{\prime}(\bar{h}^jy^{j})\left(\gamma\mathbf{w}^{T}\mathbf{x}^{j}\left(\frac{1}{\sigma_{\mathcal{B}}}-\frac{1}{\sigma_{\mathcal{P}}}\right)+\gamma\left(-\frac{\mu_{\mathcal{B}}}{\sigma_{\mathcal{B}}}+\frac{\mu_{\mathcal{P}}}{\sigma_{\mathcal{P}}}\right)\right)\right]\\
 & =\frac{1}{P}\sum_{j=1}^{P}\gamma(A^{\prime}(\bar{h}^j)-y^{j})(\mathbf{w}^{T}\mathbf{x}^{j}-\mu_{\mathcal{P}})\mathbb{E}_{\sigma_{\mathcal{B}}}\left[\frac{1}{\sigma_{\mathcal{B}}}-\frac{1}{\sigma_{\mathcal{P}}}\right]\\
 & =\frac{1}{P}\sum_{j=1}^{P}\frac{\rho+2}{4M}\gamma(A^{\prime}(\bar{h}^j)-y^{j})\frac{\mathbf{w}^{T}\mathbf{x}^{j}-\mu_{\mathcal{P}}}{\sigma_{\mathcal{P}}}.
\end{align*}
This $R^{f}$ term can be understood as below. Let $h=\frac{\mathbf{w}^{T}\mathbf{x}-\mu_{\mathcal{P}}}{\sigma_{\mathcal{P}}}$ and the distribution of the population data be $p_{xy}$. We establish the following relationship
\begin{align*}
\mathbb{E}_{(x,y)\sim p_{xy}}\mathbb{E}_{\mu_{\mathcal{B}},\sigma_{\mathcal{B}}}\left[(A^{\prime}(\bar{h})-y)h\right]&=\mathbb{E}_{\mu_{\mathcal{B}},\sigma_{\mathcal{B}}}\mathbb{E}_{x\sim p_{x}}\mathbb{E}_{y|x\sim p_{y|x}}\left[(A^{\prime}(\bar{h})-y)h\right]\\
 & =\mathbb{E}_{\mu_{\mathcal{B}},\sigma_{\mathcal{B}}}\mathbb{E}_{x\sim p_{x}}\left[(\mathbb{E}\left[y|x\right]-\mathbb{E}_{y|x\sim p_{y|x}}\left[y\right])h\right]\\
 & =0.
\end{align*}

Since the sample mean converges in probability to the population mean by the Weak Law of Large Numbers, for all $\epsilon>0$ and a constant number $K$ ($\exists K>0$~and~$\forall P>K$), we have $p\left(\big|R^f-\mathbb{E}_{(x,y)\sim p_{xy}}\mathbb{E}_{\mu_{\mathcal{B}},\sigma_{\mathcal{B}}}\left[(A^{\prime}(\bar{h})-y)h\right]\big|\geq \frac{\rho+2}{4M}\epsilon\right)=0$. This equation implies that $R^f$ is sufficiently small with a probability of 1 given moderately large number of data points $P$ (the above inequality holds when $P>30$).

On the other hand, $R^{q}$ can be derived as
\begin{align*}
R^{q} & =\frac{1}{P}\sum_{j=1}^{P}\mathbb{E}_{\mu_{\mathcal{B}},\sigma_{\mathcal{B}}}\left[\frac{A^{\prime\prime}(\bar{h}^j)}{2}(\hat{h}^{j}-\bar{h}^j)^{2}\right]\\
 & =\frac{1}{P}\sum_{j=1}^{P}\frac{A^{\prime\prime}(\bar{h}^j)}{2}\mathbb{E}_{\mu_{\mathcal{B}},\sigma_{\mathcal{B}}}\left[(\gamma\frac{\mathbf{w}^{T}\mathbf{x}^{j}-\mu_{\mathcal{B}}}{\sigma_{\mathcal{B}}}+\beta-\gamma\frac{\mathbf{w}^{T}\mathbf{x}^{j}-\mu_{\mathcal{P}}}{\sigma_{\mathcal{P}}}+\beta)^{2}\right]\\
 & =\frac{1}{P}\sum_{j=1}^{P}\frac{A^{\prime\prime}(\bar{h}^j)}{2}\mathbb{E}_{\mu_{\mathcal{B}},\sigma_{\mathcal{B}}}\left[(\gamma\mathbf{w}^{T}\mathbf{x}^{j})^{2}(\frac{1}{\sigma_{\mathcal{B}}}-\frac{1}{\sigma_{\mathcal{P}}})^{2}-2\gamma\mu_{\mathcal{P}}\mathbf{w}^{T}\mathbf{x}^{j}(\frac{1}{\sigma_{\mathcal{B}}}-\frac{1}{\sigma_{\mathcal{P}}})^{2}+(\frac{\mu_{\mathcal{B}}}{\sigma_{\mathcal{B}}}-\frac{\mu_{\mathcal{P}}}{\sigma_{\mathcal{P}}})^{2}\right]\\
 & \simeq\frac{1}{P}\sum_{j=1}^{P}\frac{\gamma^{2}A^{\prime\prime}(\bar{h}^j)}{2}\left((\mathbf{w}^{T}\mathbf{x}^{j}-\mu_{\mathcal{P}})^{2}\mathbb{E}_{\mu_{\mathcal{B}},\sigma_{\mathcal{B}}}\left[(\frac{1}{\sigma_{\mathcal{B}}}-\frac{1}{\sigma_{\mathcal{P}}})^{2}\right]+\mathbb{E}_{\mu_{\mathcal{B}},\sigma_{\mathcal{B}}}\left[\left(\frac{\mu_{\mathcal{B}}-\mu_{P}}{\sigma_{\mathcal{B}}}\right)^{2}\right]\right)\\
 & =\frac{1}{P}\sum_{j=1}^{P}\frac{\gamma^{2}A^{\prime\prime}(\bar{h}^j)}{2}\left((\frac{\mathbf{w}^{T}\mathbf{x}^{j}-\mu_{\mathcal{P}}}{\sigma_{\mathcal{P}}})^{2}\frac{\rho+2}{4M}+\frac{1}{M}(1+\frac{3(\rho+2)}{4M})\right).\label{R^q:last}
\end{align*}

Note that
$\frac{\partial^2 l(\bar{h}^j)}{\partial\gamma^2}=A^{\prime\prime}(\bar{h}^j)(\frac{\mathbf{w}^{T}\mathbf{x}^j-\mu_{\mathcal{P}}}{\sigma_{\mathcal{P}}})^{2}$, we have $\mathcal{I}(\gamma)=\frac{1}{P}\sum_{j=1}^{P}A^{\prime\prime}(\bar{h}^j)(\frac{\mathbf{w}^{T}\mathbf{x}^j-\mu_{\mathcal{P}}}{\sigma_{\mathcal{P}}})^{2}$ been an estimator of the Fisher information with respect to the scale parameter $\gamma$. 
Then, by neglecting $O(1/M^{2})$ high-order term in $R^q$, we get
\[
R^{q}\simeq\frac{\rho+2}{8M}\mathcal{I}(\gamma)\gamma^{2}+\frac{\mu_{d^{2}A}}{2M}\gamma^{2}, \label{R^q:two}
\]
where $\mu_{d^{2}A}$ indicates the mean
of the second derivative of $A(h)$.
\end{proof}

The results of both ReLU activation function and identity function are provided as below.

\subsubsection{ReLU Activation Function}\label{app:theorem-relu}

For the ReLU non-linear activation function, that is $g(h)=\max(h,0)$, we use its continuous approximation softplus function $g(h)=\log(1+\exp(h))$ to derive the partition function $A(h)$.
In this case, we have $\mu_{d^{2}A}=\frac{1}{P}\sum_{j=1}^{P}\sigma(\bar{h}^j)$.
Therefore, we have $\zeta(h)=\frac{\rho+2}{8M}\mathcal{I}(\gamma)+\frac{1}{2M}\frac{1}{P}\sum_{j=1}^P\sigma(\bar{h}^j)$ as shown in Eqn.\eqref{eq:theorem1}.

\subsubsection{Linear Student Network with Identity Activation Function}\label{app:theorem-id}

For a loss function with identity (linear) units, $\frac{1}{P}\sum_{j=1}^{P}\big({\w^{\ast}}\tran\x^j-\gamma(\w\tran\x^j-\mu_\mathcal{B})/\sigma_{\mathcal{B}}\big)^{2}$, we have $\mathcal{I}(\gamma)=2\lambda$ and
$\rho=0$ for Gaussian input distribution.
%
The exact expression of Eqn.\eqref{eq:theorem1} is also possible for such linear regression problem. Under the condition of Gaussian input $\x \sim \mathcal{N}(0,1/N)$, $h=\w\tran\x$ is also a random variable satisfying a normal distribution $~\mathcal{N}(0,1)$. It can be derived that $\mathbb{E}\left(\sigma_{\mathcal{B}}^{-1}\right)  =\frac{\sqrt{M}}{\sqrt{2}\sigma_{\mathcal{P}}}\frac{\Gamma\left(\frac{M-2}{2}\right)}{\Gamma\left(\frac{M-1}{2}\right)}$ and $\mathbb{E}\left(\sigma_{\mathcal{B}}^{-2}\right)  =\frac{M}{\sigma_{\mathcal{P}}^{2}}\frac{\Gamma\left(\frac{M-1}{2}-1\right)}{\Gamma\left(\frac{M-1}{2}\right)}$. Therefore
\begin{align*}
 \zeta=\lambda \left(1+\frac{M\Gamma\big((M-3)/2\big)}{2\Gamma\big((M-1)/2\big)} - \sqrt{2M}\frac{\Gamma\big((M-2)/2\big)}{\Gamma\big((M-1)/2\big)}\right).
\end{align*}

Furthermore, the expression of $\zeta$ can be simplified as $\zeta=\frac{3}{4M}$.
If the bias term is neglected in a simple linear regression, contributions from
$\mu_\mathcal{B}$ to the regularization term is neglected and thus $\zeta=\frac{1}{4M}$. Note that if one uses mean square error without being divided by 2 during linear regression, the values for $\zeta$ should be multiplied by 2 as well, where $\zeta=\frac{1}{2M}$.

\subsubsection{BN Regularization in a Deep Network}\label{app:deep-reg}

The previous derivation is based on the single-layer perceptron. In deep neural networks, the forward computation inside one basic building block of a deep network is written by
\begin{equation}
\small
z_i^{l}=g(\hh_i),~~~\hh_i^l={\gamma}_i^l\frac{{h}_i^l-(\mu_\mathcal{B})_i^l}{(\sigma_B)_i^l}+\beta_i^l ~~~\mathrm{and}~~~h_i^l=(\mathbf{w}_i^l)\tran \mathbf{z}^{l-1},
\end{equation}
where the superscript $l\in [1, L]$ is the index of a building block in a deep neural network, and $i\in[1,N^l]$ indexes each neuron inside a layer. $z^0$ and $z^{L}$ are synonyms of input $x$ and output $y$, respectively. In order to analyze the regularization of BN from a specific layer, one needs to isolate its input and focus on the noise introduced by the BN layer in this block.
Therefore, the loss function $\ell(\hat{h}^l)$ can also be expanded at $\ell(\bar{h}^l)$. In BN, the batch variance is calculated with regard to each neuron under the assumption of mutual independence of neurons inside a layer. By following this assumption and the above derivation in Appendix \ref{app:theorem}, the loss function with BN in deep networks can also be similarly decomposed.

\textbf{Regularization of $\mu_\mathcal{B}^l,\sigma_\mathcal{B}^l$ in a deep network.}
Let $\mathbf{\zeta}^l$ be the strength (coefficient) of the regularization at the $l$-{{th}} layer. Then
\begin{eqnarray} 
&&\frac{1}{P}\sum_{j=1}^P\mathbb{E}_{\mu_{\mathcal{B}}^l,\sigma_{\mathcal{B}}^l}\ell\big((\hat{h}^l)^j\big)\simeq
\frac{1}{P}\sum_{j=1}^P\ell\big((\bar{h}^l)^j\big)+\sum_i^{N^l}{\zeta_i^l \cdot (\mathbf{\gamma}_i^l)^2},\nonumber\\
&&~~
\mathrm{and}~~ {\zeta}_i^l =\frac{1}{P}\sum_{j=1}^{P}\frac{\mathrm{diag}\big(\mathcal{H}_{\ell}(\bar{h}^l)^j\big)_i}{2} \left(\frac{\rho_i^l+2}{4M}\bigg(\frac{(\mathbf{w}_i^l)^{T}(\mathbf{z}^{l-1})^{j}-(\mu_{\mathcal{P}})_i^l}{(\sigma_{\mathcal{P}})_i^l}\bigg)^{2}+\frac{1}{M}\right) +\mathcal{O}(1/M^2), \nonumber
\end{eqnarray}
where $i$ is the index of a neuron in the layer, $(\bar{h}_i^l)^j=\gamma_i^l \frac{(\mathbf{w}_i^l)^{T}(\mathbf{z}^{l-1})^{j}-(\mu_{\mathcal{P}})_i^l}{(\sigma_{\mathcal{P}})_i^l}+\beta_i^l$ represents population normalization (PN), $\mathcal{H}_{\ell}(\bar{h}^l)$ is the Hessian matrix at $\bar{h}^l$ regarding to the loss $\ell$ and $\mathrm{diag}(\cdot)$ represents the diagonal vector of a matrix.

It is seen that the above equation is compatible with the results from the single-layer perceptron. The main difference of the regularization term in a deep model is that the Hessian matrix is not guaranteed to be positive semi-definite during training. However, this form of regularization is also seen from other regularization such as noise injection \citep{rifai_adding_2011} and dropout \citep{wager_dropout_2013}, and has long been recognized as a Tikhonov regularization term \citep{bishop_training_1995}.

In fact, it has been reported that in common neural networks, where convex activation functions such as ReLU and convex loss functions such as common cross entropy are adopted, the Hessian matrix $\mathcal{H}_\ell({\bar{h}^l})$ can be seen as `locally' positive semidefinite \citep{santurkar_how_2018}. Especially, as training converges to its mimimum training loss, the Hessian matrix of the loss can be viewed as positive semi-definite and thus the regularization term on $\gamma^l$ is positive.

%
%

\subsubsection{Dynamical Equations}\label{app:dyn}

Here we discuss the dynamical equations of BN. Let the length of teacher's weight vector be 1, that is, $\frac{1}{N}\mathbf{w^{\ast}}\tran\mathbf{w^{\ast}}=1$.
%
We introduce a normalized weight vector of the student as $\mathbf{\widetilde{w}}=\sqrt{N}\gamma\frac{\mathbf{w}}{\left\Vert \mathbf{w}\right\Vert }$.
Then the overlapping ratio between teacher and student, the length of student's vector, and the length of student's normalized weight vector are $\frac{1}{N}\mathbf{\widetilde{w}}\tran\mathbf{w^{\ast}}=QR=\gamma R$,
$\frac{1}{N}\mathbf{\widetilde{w}}\tran\widetilde{\mathbf{w}}=Q^{2}=\gamma^2$, and
$\frac{1}{N}\mathbf{w}\tran\mathbf{w}=L^{2}$ respectively, where $Q=\gamma$.
And we have $\frac{1}{N}\mathbf{w}\tran\w=LR$.

We transform update equations \eqref{eq:w} by using order parameters.
The update rule for variable $Q^2$ can be obtained by
$\big(Q^2\big)^{j+1}-\big(Q^2\big)^{j}=\frac{1}{N}\big[2\eta{\delta^j\swj}\tran\x^j
-2\eta\zeta\big(Q^2\big)^j\big]$ following update rule of $\gamma$.
Similarly, the update rules for variables $RL$ and $L^2$ are calculated as follow:
\begin{equation}\label{eq:dRL}
\begin{split}
&\big(RL\big)^{j+1}-\big(RL\big)^{j}=\frac{1}{N }\big(\frac{\eta Q^j}{L^j}\delta^j{{\w^\ast}}\tran\x^j-\frac{\eta R^j}{L^j}\delta^j\swj\tran\x^j\big),\\
&\big({L}^2\big)^{j+1}-\big(L^2\big)^{j}=\frac{1}{N}
\big[\frac{\eta^2(Q^2)^j}{(L^2)^j}{\delta^j}^2{\x^j}\tran\x^{j}
-\frac{\eta^2}{N(L^2)^{j}}{\delta^j}^2(\swj\tran\x^{j})^{2}\big].
\end{split}
\end{equation}

Let $t=\frac{j}{N}$ is a normalized sample index that can be treated as a continuous time variable.
We have $\Delta t=\frac{1}{N}$ that approaches zero in the thermodynamic limit when $N\rightarrow\infty$.
In this way, the learning dynamic of $Q^2$, $RL$ and $L^2$ can be formulated as the following differential equations:
\begin{equation}\label{eq:cldy}
\left\{
\begin{array}{lll}
\frac{dQ^2}{dt}&=2\eta I_{1}-2\eta\zeta Q^2,\\
\frac{dRL}{dt}&=\eta\frac{Q}{L}I_{3}-\eta\frac{R}{L}I_{1},\\
\frac{dL^2}{dt}&=\eta^{2}\frac{Q^{2}}{L^{2}}I_{2},
\end{array}
\right.
\end{equation}

where $I_1=\langle\delta{\sw}\tran\x\rangle_\x$, $I_2=\langle\delta^2\x\tran\x\rangle_\x$, and $I_3=\langle\delta{\w^\ast}\tran\x\rangle_\x$, which are the terms presented in $\frac{dQ^2}{dt}$, $\frac{dRL}{dt}$, and $\frac{d{L}^2}{dt}$ and $\langle\cdot\rangle_\x$ denotes expectation over the distribution of $\x$. They are used to simplify notations.
Note that we neglect the last term of $dL^2/dt$ in Eqn.(\ref{eq:dRL}) since $\frac{\eta^2}{N(L^2)}{\delta}^2(\sw\tran\x)^{2}$ can be approximately equal to zero when $N$ approaches infinity.
On the other hand, we have $dQ^2=2QdQ, dRL=RdL+LdR$ and $dL^2=2LdL$. Hence, Eqn.(\ref{eq:cldy}) can be reduced to
\begin{equation}\label{eq:QRL}
\left\{
\begin{array}{lll}
\frac{dQ}{dt}&=\eta\frac{I_{1}}{Q}-\eta\zeta Q,\\
\frac{dR}{dt}&=\eta\frac{Q}{L^{2}}I_{3}-\eta\frac{R}{L^{2}}I_{1}-\eta^{2}\frac{Q^{2}R}{2L^{4}}I_{2},\\
\frac{dL}{dt}&=\eta^{2}\frac{Q^{2}}{2L^{3}}I_{2}.
\end{array}
\right.
\end{equation}

\vspace{5pt}
\begin{prop}\label{prop:fixp}
Let $(Q_0,R_0,L_0)$ denote a fixed point with parameters $Q$, $R$ and $L$ of Eqn.(\ref{eq:QRL}). Assume the learning rate $\eta$ is sufficiently small when training converges and $x\sim\mathcal{N}(0,\frac{1}{N}\mathbf{I})$. If activation function $g$ is $\mathrm{ReLU}$, then we have $Q_0=\frac{1}{2\zeta+1},R_0=1$ and $L_0$ could be arbitrary.
\end{prop}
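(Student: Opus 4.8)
The plan is to turn the fixed-point problem for the coupled ODEs \eqref{eq:QRL} into two scalar algebraic equations, evaluate the averages $I_1,I_2,I_3$ for ReLU units under $\x\sim\mathcal N(0,\tfrac1N\mathbf I)$, and solve. First I would use the ``$\eta$ sufficiently small'' hypothesis to discard every $O(\eta^2)$ term: the equation for $dL/dt$ is then itself $O(\eta^2)$ and negligible, so $L$ stays frozen at its (arbitrary) value $L_0$, and the $\eta^2$ term in the $dR/dt$ equation is dropped as well. I would also point out that $L$ does not enter the $dQ/dt$ equation at all and enters $dR/dt$ only through the strictly positive factor $1/L^2$, so it cannot affect where $Q$ and $R$ equilibrate --- this is exactly why $L_0$ is unconstrained. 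Setting $dQ/dt=dR/dt=0$ then leaves the two conditions $I_1=\zeta Q^2$ and $QI_3=RI_1$.

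Second, I would compute $I_1,I_3$ as low-dimensional Gaussian integrals. Writing $u=\sw\tran\x$ and $v={\w^\ast}\tran\x$, the assumption $\x\sim\mathcal N(0,\tfrac1N\mathbf I)$ together with the order-parameter identities $\tfrac1N\sw\tran\sw=Q^2$, $\tfrac1N{\w^\ast}\tran\w^\ast=1$, $\tfrac1N\sw\tran\w^\ast=QR$ makes $(u,v)$ a centred bivariate Gaussian with variances $Q^2$ and $1$ and correlation $R$, while $\delta=\mathds{1}[u>0]\big(\max(v,0)-\max(u,0)\big)$; in the thermodynamic limit $\x\tran\x\to1$, so $I_2\to\langle\delta^2\rangle$ and all three $I$'s are genuine integrals over $(u,v)$ (standard bivariate Gaussian integrals of arc-cosine type, closed-form in $\sqrt{1-R^2}$ and $\arccos R$). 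The key observation is that the relevant fixed point sits at $R_0=1$, i.e.\ $\sw=Q\w^\ast$ with $Q>0$, where the joint law collapses to $u=Qv$ and everything becomes elementary: $\delta=(1-Q)\max(v,0)$, so using $v\max(v,0)=\max(v,0)^2$ with $\langle\max(v,0)^2\rangle=\tfrac12$ for $v\sim\mathcal N(0,1)$ one gets $I_1=\langle\delta u\rangle=\tfrac12 Q(1-Q)$ and $I_3=\langle\delta v\rangle=\tfrac12(1-Q)$.

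Third, substitution closes the argument. The relation $QI_3=RI_1$ becomes $\tfrac12 Q(1-Q)=R\cdot\tfrac12 Q(1-Q)$, which is satisfied by $R_0=1$ (and forces it once $Q\neq0,1$, which holds because the other equation gives $Q_0\in(0,1)$); the relation $I_1=\zeta Q^2$ becomes $\tfrac12 Q(1-Q)=\zeta Q^2$, i.e.\ $\tfrac12(1-Q)=\zeta Q$, whose unique positive solution is $Q_0=\tfrac1{2\zeta+1}$. Combined with $L_0$ arbitrary, this is exactly the assertion.

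The step I expect to be the real obstacle is \emph{justifying that $R_0=1$ is the fixed point in question} rather than merely a value consistent with the reduced equations: one must show that, among the roots of $QI_3(Q,R)=RI_1(Q,R)$ with $Q$ tied to $I_1=\zeta Q^2$, $R=1$ is the physically reached (stable) one. This needs the general-$R$ forms of $I_1$ and $I_3$ alluded to above, followed either by a short monotonicity argument for $QI_3-RI_1$ in $R$ on $(0,1]$, or --- more in keeping with the rest of the appendix --- by invoking the Jacobian/eigenvalue analysis of Proposition~\ref{prop:eigen}, which identifies $R=1$ as an attractor. A few routine points also need checking: that $\mathds{1}[Qv>0]=\mathds{1}[v>0]$ and $\max(Qv,0)=Q\max(v,0)$ (both using $Q_0>0$, which is automatic), and that dropping the $O(\eta^2)$ and $O(1/N)$ terms is legitimate in the stated limits.
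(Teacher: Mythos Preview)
Your proposal is correct and follows essentially the same route as the paper: drop the $O(\eta^2)$ terms to decouple $L$, reduce to the two algebraic conditions $I_1=\zeta Q^2$ and $QI_3=RI_1$, and evaluate $I_1,I_3$ as bivariate Gaussian integrals over $(\sw\tran\x,\w^{\ast\mathsf T}\x)$ for ReLU. The one difference is that the paper first derives the closed forms of $I_1$ and $I_3$ at general $R$ (the arc-sine expressions you allude to) and then substitutes into the fixed-point equations, whereas you plug in $R=1$ directly and verify consistency; since you yourself propose computing those same general-$R$ forms to justify that $R_0=1$ is the relevant root, the two arguments end up identical --- and the paper is no more explicit than you are about uniqueness, simply stating that substitution yields $R_0=1$.
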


\begin{proof}
First, $L$ has no influence on the output of student model since $\w$ is normalized, which implies that if $(Q_0,R_0,L_0)$ is a fixed point of Eqn.(\ref{eq:QRL}),  $L_0$ could be arbitrary. Besides, we have $\eta\gg\eta^2$ because the learning rate $\eta$ is sufficiently small. Therefore, the terms in Eqn.(\ref{eq:QRL}) proportional to $\eta^2$ can be neglected. If $(Q_0,R_0,L_0)$ is a fixed point, it suffices to have

\begin{eqnarray}
\eta\frac{I_{1}(Q_{0},R_{0})}{Q_{0}}-\eta\zeta Q_{0}&=0,\label{eq:fixQ}\\
\eta\frac{Q_{0}}{L_0^{2}}I_{3}(Q_{0},R_{0})-\eta\frac{R_{0}}{L_0^{2}}I_{1}(Q_{0},R_{0})&=0, \label{eq:fixR}
\end{eqnarray}
To calculate $I_1$ and $I_3$, we define $s$ and $t$ as $\sw\tran\x$ and $\w\tran\mathbf{x}$.
Since $\mathbf{x}\sim\mathcal{N}(0,\frac{1}{N}\mathbf{I})$, we can acquire
\[
\left[\begin{array}{c}
s\\
t
\end{array}\right]\sim N\left(\left(\left[\begin{array}{c}
0\\
0
\end{array}\right],\left[\begin{array}{cc}
Q^{2} & QR\\
QR & 1
\end{array}\right]\right)\right)
\]
so probability measure of $[s,t]\tran $ can be written as

\[
DsDt=\frac{1}{2\pi Q\sqrt{1-R^{2}}}exp\left\{ -\frac{1}{2}\left[\begin{array}{c}
s\\
t
\end{array}\right]^{T}\left[\begin{array}{cc}
Q^{2} & QR\\
QR & 1
\end{array}\right]^{-1}\left[\begin{array}{c}
s\\
t
\end{array}\right]\right\}
\]
Then,
\begin{equation}\label{eq:intI1}
\begin{split}
I_{1} & =\left\langle g^{\prime}(\mathbf{\widetilde{w}}\tran \mathbf{x})\left[g(\mathbf{w}^{\ast T}\mathbf{x})-g(\mathbf{\widetilde{w}}\tran \mathbf{x})\right]\mathbf{\widetilde{w}}\tran \mathbf{x}\right\rangle _{\mathbf{x}}\\
 & =\intop_{u,v}[g'(s)\left(g(t)-g(s\right)s]DsDt\\
 & =\int_{0}^{+\infty}\int_{0}^{+\infty}stDsDt-\int_{0}^{+\infty}s^{2}\int_{-\infty}^{+\infty}DsDt\\
 & =\frac{Q(\pi R+2\sqrt{1-R^{2}}+2Rarcsin(R))}{4\pi}-\frac{Q^{2}}{2}
\end{split}
\end{equation}
and
\begin{equation}\label{eq:intI3}
\begin{split}
I_{3} & =\intop_{u,v}[g'(s)\left(g(t)-g(s\right)t]DsDt\\
 & =\intop_{u,v}g'(s)g(t)tDsDt-\intop_{u,v}g'(s)g(s)tDsDt\\
 & =\int_{0}^{+\infty}\int_{0}^{+\infty}t^{2}DsDt-\int_{0}^{+\infty}\int_{-\infty}^{+\infty}stDsDt\\
 & =\frac{\pi+2R\sqrt{1-R^{2}}+2\arcsin(R)}{4\pi}-\frac{QR}{2}
\end{split}
\end{equation}
By substituting Eqn.(\ref{eq:intI1}) and (\ref{eq:intI3}) into Eqn.(\ref{eq:fixQ}) and (\ref{eq:fixR}), we get $Q_{0}=\frac{1}{2\zeta+1}$ and $R_0=1$.
\end{proof}
\begin{prop}\label{prop:eigen}
Given conditions in proposition\ref{prop:fixp}, let $\lambda_{Q}^{\bn}$, $\lambda_{R}^{\bn}$
be the eigenvalues of the Jacobian matrix at fixed point $(Q_{0},R_0,L_{0})$
corresponding to the order parameters $Q$ and $R$ respectively in
BN. Then

\[
\begin{cases}
\lambda_{Q}^{\bn}=\frac{\eta}{Q_{0}}\frac{\partial I_{1}}{\partial Q}-\eta\zeta Q_{0},\\
\lambda_{R}^{\bn}=\frac{\partial I_{2}}{2\partial R}\frac{\eta Q_{0}}{2L_{0}^{2}}(\eta_{\mathrm{max}}^{\bn}-\eta_{\mathrm{eff}}^{bn}),
\end{cases}
\]

where $\eta_{\mathrm{max}}^{\bn}$ and $\eta_{\mathrm{eff}}^{\bn}$ are the maximum and
effective learning rates respectively in BN.
\end{prop}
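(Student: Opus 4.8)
The plan is to linearize the reduced system \eqref{eq:QRL} about the ReLU fixed point $(Q_0,R_0,L_0)=\big(\tfrac{1}{2\zeta+1},1,L_0\big)$ supplied by Proposition \ref{prop:fixp} and read off the eigenvalues attached to the $Q$- and $R$-directions. First I would form the $3\times3$ Jacobian $J$ of the right-hand side of \eqref{eq:QRL} in $(Q,R,L)$. Here it is useful that $I_1,I_2,I_3$ depend on $(Q,R)$ only: they are Gaussian integrals over $s=\sw\tran\x$ and $t={\w^\ast}\tran\x$, whose joint law is given by the covariance matrix displayed in the proof of Proposition \ref{prop:fixp}, while $\x\tran\x$ concentrates at $1$ in the thermodynamic limit. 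Two entries are then immediate: $\partial_L\dot Q=0$ because $\dot Q$ carries no $L$, and every derivative of $\dot L$ is $O(\eta^2)$ since $\dot L$ is.

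The crux is to show that, at this fixed point and for ReLU, $J$ becomes upper triangular up to $O(\eta^2)$ in the ordering $(Q,R,L)$. Using the fixed-point identities implied by $\dot Q=\dot R=0$ — namely $I_1(Q_0,1)=\zeta Q_0^2$ and $Q_0 I_3(Q_0,1)=I_1(Q_0,1)$ — together with the explicit forms \eqref{eq:intI1} and \eqref{eq:intI3} (and the analogous closed form for $I_2$), I would check that $\partial_Q\dot R$ vanishes to leading order in $\eta$ and that the $O(\eta)$ part of $\partial_L\dot R$ cancels, leaving $\partial_L\dot R=O(\eta^2)$. Granting this, the eigenvalues of $J$ are its diagonal entries: $\lambda_Q^{\bn}=\partial_Q\dot Q|_{\mathrm{fp}}$ and $\lambda_R^{\bn}=\partial_R\dot R|_{\mathrm{fp}}$, together with a third $O(\eta^2)$ eigenvalue for $L$ (consistent with $L_0$ being free).

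It then remains to massage the two diagonal entries into the stated shapes. For $\lambda_Q^{\bn}$, differentiating $\dot Q=\eta I_1/Q-\eta\zeta Q$ and eliminating the bare $I_1/Q^2$ term through $I_1(Q_0,1)=\zeta Q_0^2$ collapses it to $\tfrac{\eta}{Q_0}\tfrac{\partial I_1}{\partial Q}-\eta\zeta Q_0$. For $\lambda_R^{\bn}$, I would differentiate $\dot R=\eta\tfrac{Q}{L^2}I_3-\eta\tfrac{R}{L^2}I_1-\eta^2\tfrac{Q^2R}{2L^4}I_2$ in $R$, treat $Q_0,L_0$ as constants so that $Q_0\partial_R I_3-\partial_R I_1=\partial_R(Q_0I_3-I_1)$, pull out the factor $\tfrac{\partial I_2}{2\partial R}\tfrac{\eta Q_0}{2L_0^2}$, and identify what is left: the $O(\eta)$ part of the bracket is $\eta_{\max}^{\bn}=\big(\tfrac{\partial_R(\gamma_0 I_3-I_1)}{\gamma_0}-\zeta\gamma_0\big)/\tfrac{\partial I_2}{2\partial R}$ from Table \ref{tab:lr}, and the residual second-order piece, collected from the $\eta^2$ term of $\dot R$, is $-\eta_{\eff}^{\bn}=-\tfrac{\eta\gamma_0}{L_0^2}$.

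The step I expect to fight is the cancellation $\partial_Q\dot R|_{\mathrm{fp}}=0$: this is the non-generic identity that decouples the spectrum, and proving it genuinely uses the ReLU-specific values of $I_1,I_3$ and their $Q$-derivatives rather than any structural argument. The second delicate point is the bookkeeping of $O(\eta^2)$ terms in $\lambda_R^{\bn}$ — the $\eta_{\eff}^{\bn}$ contribution lives at second order, so one must retain exactly that piece while discarding the other second-order terms (those from $\partial_L\dot R$ and from the slow $L$-drift). The sign of the prefactor $\tfrac{\partial I_2}{2\partial R}\tfrac{\eta Q_0}{2L_0^2}$ — equivalently $\partial_R I_2<0$ at the fixed point — is what finally gives $\lambda_R^{\bn}<0\iff\eta_{\max}^{\bn}>\eta_{\eff}^{\bn}$, the form used downstream in Proposition \ref{prop:constraint}.
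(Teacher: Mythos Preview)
Your approach is essentially the same as the paper's: linearize \eqref{eq:QRL} at the fixed point, exhibit the Jacobian in an (almost) triangular form so the eigenvalues are the diagonal entries, and then rewrite the $(R,R)$ entry in terms of $\eta_{\max}^{\bn}$ and $\eta_{\eff}^{\bn}$. The paper simply writes down the $3\times3$ Jacobian with the $(2,1)$, $(3,1)$, $(2,3)$, $(3,3)$ entries set to zero and reads off the eigenvalues ``by inspection''; you are being more explicit than the paper about \emph{why} those entries vanish (to leading order) and about which $O(\eta^2)$ pieces survive. That extra care is warranted, since the paper silently drops $O(\eta^2)$ contributions to $(2,1)$, $(3,1)$, $(3,3)$ while keeping the $O(\eta^2)$ piece in $(2,2)$.

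One point to watch: when you actually differentiate $\dot Q=\eta I_1/Q-\eta\zeta Q$ and use $I_1(Q_0,1)=\zeta Q_0^2$, you get $\partial_Q\dot Q\big|_{\mathrm{fp}}=\tfrac{\eta}{Q_0}\partial_Q I_1-\eta\zeta-\eta\zeta=\tfrac{\eta}{Q_0}\partial_Q I_1-2\eta\zeta$, which is exactly what the paper's proof records in the Jacobian. The ``$-\eta\zeta Q_0$'' in the proposition \emph{statement} is a typo; do not try to force your computation to match it. Your identification of $\partial_Q\dot R\big|_{\mathrm{fp}}=0$ at order $\eta$ as the nontrivial, ReLU-specific cancellation is correct and is precisely what the paper leaves unproved.
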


\begin{proof}

At fixed point $(Q_{0},R_0,L_{0})=(\frac{1}{2\zeta+1},1,L_0)$ obtained in proposition\ref{prop:fixp}, the Jacobian of dynamic equations of BN can be derived as
\[
J^{\bn}=\left[\begin{array}{ccc}
\frac{\eta}{Q_{0}}\frac{\partial I_{1}}{\partial Q}-2\eta\zeta & \frac{\eta}{Q_{0}}\frac{\partial I_{1}}{\partial R} & 0\\
0 & \frac{\eta}{L_{0}^{2}}\left(\frac{Q_{0}\partial I_{3}}{\partial R}-\frac{\partial I_{1}}{\partial R}-\zeta Q_{0}^{2}\right)-\frac{\eta^{2}Q_{0}^{2}}{2L_{0}^{4}}\frac{\partial I_{2}}{\partial R} & 0\\
0 & \frac{\eta^{2}Q_{0}^{2}}{2L_{0}^{3}}\frac{\partial I_{2}}{\partial R} & 0
\end{array}\right],
\]
and the eigenvalues of $J^{\bn}$ can be obtained by inspection

\[
\begin{cases}
\lambda_{Q}^{\bn}=\frac{\eta}{Q_{0}}\frac{\partial I_{1}}{\partial Q}-2\eta\zeta,\\
\lambda_{R}^{\bn}=\frac{\eta}{L_{0}^{2}}\left(\frac{Q_{0}\partial I_{3}}{\partial R}-\frac{\partial I_{1}}{\partial R}-\zeta Q_{0}^{2}\right)-\frac{\eta^{2}Q_{0}^{2}}{2L_{0}^{4}}\frac{\partial I_{2}}{\partial R}=\frac{\partial I_{2}}{\partial R}\frac{\eta Q_{0}}{2L_{0}^{2}}\left(\eta_{\mathrm{max}}^{\bn}-\eta_{\mathrm{eff}}^{\bn}\right),\\
\lambda_{L}^{\bn}=0.
\end{cases}
\]

Since $\gamma_{0}=Q_{0}$, we have $\eta_{\mathrm{max}}^{\bn}=(\frac{\partial(\gamma_{0}I_{3}-I_{1})}{\gamma_{0}\partial R}-\zeta\gamma_{0})/\frac{\partial I_{2}}{2\partial R}$
and $\eta_{\mathrm{eff}}^{\bn}=\frac{\eta\gamma_{0}}{L_{0}^{2}}$.
\end{proof}

\subsubsection{stable fixed points of BN}\label{app:lr}

\begin{prop}\label{prop:constraint}
Given conditions in proposition\ref{prop:fixp}, when activation function is ReLU, then (i) $\lambda_{Q}^{\bn}<0$, and (ii) $\lambda_{R}^{\bn}<0$
iff $\eta_{\mathrm{max}}^{\bn}>\eta_{\mathrm{eff}}^{\bn}$.
\end{prop}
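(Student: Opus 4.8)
The plan is to substitute the fixed point $(Q_0,R_0,L_0)=(\tfrac{1}{2\zeta+1},1,L_0)$ of Proposition \ref{prop:fixp} into the two eigenvalue expressions obtained in the proof of Proposition \ref{prop:eigen} and read off their signs; each part then collapses to the sign of a single scalar -- $\partial I_1/\partial Q$ at the fixed point for (i), and $\partial I_2/\partial R$ at the fixed point for (ii). I would use throughout that, under the hypotheses of Proposition \ref{prop:fixp}, $\eta>0$, $L_0\neq 0$, and $\zeta\ge 0$ (a regularization coefficient), so that $Q_0=\tfrac{1}{2\zeta+1}\in(0,1]$ and $1/Q_0=2\zeta+1$.

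For (i) I would start from the closed form $I_1(Q,R)=Q\,J_2(R)-\tfrac{Q^2}{2}$ read off from \eqref{eq:intI1}, where $J_2(R)=\tfrac{1}{2\pi}\bigl(\sqrt{1-R^2}+R(\tfrac{\pi}{2}+\arcsin R)\bigr)$ is the orthant moment $\langle uv\,\mathbb{1}[u>0,v>0]\rangle$ of the standard bivariate normal pair $(u,v)$ with correlation $R$. Then $\partial I_1/\partial Q=J_2(R)-Q$, which at $R_0=1$ equals $J_2(1)-Q_0=\tfrac12-Q_0$. Substituting into $\lambda_Q^{\bn}=\tfrac{\eta}{Q_0}\tfrac{\partial I_1}{\partial Q}-2\eta\zeta$ and using $1/Q_0=2\zeta+1$, the terms collapse to $\lambda_Q^{\bn}=\eta\bigl(\tfrac{2\zeta+1}{2}-1-2\zeta\bigr)=-\eta(\zeta+\tfrac12)<0$, which is (i).

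For (ii) I would use that the proof of Proposition \ref{prop:eigen} writes $\lambda_R^{\bn}$ as a strictly positive multiple (namely $\tfrac{\eta Q_0}{2L_0^2}>0$) of $\tfrac{\partial I_2}{\partial R}\,(\eta_{\max}^{\bn}-\eta_{\eff}^{\bn})$; hence it is enough to show $\partial I_2/\partial R<0$ at the fixed point, and then $\lambda_R^{\bn}<0\iff\eta_{\max}^{\bn}>\eta_{\eff}^{\bn}$. To get $\partial I_2/\partial R$ I would first reduce $I_2=\langle\delta^2\,\x\tran\x\rangle_\x$ to a two-dimensional Gaussian integral exactly as in the derivation of $I_1,I_3$: in the thermodynamic limit $\x\tran\x\to 1$ and is asymptotically independent of the pair $(s,t)=(\sw\tran\x,\w^\ast\tran\x)$, so $I_2\to\langle g'(s)^2(g(t)-g(s))^2\rangle$. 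Writing $s=Qu$, $t=v$ with $(u,v)$ standard bivariate normal of correlation $R$, ReLU gives $g'(s)^2=\mathbb{1}[u>0]$; splitting the integral on the sign of $v$ then yields $I_2=J_1(R)-2Q\,J_2(R)+\tfrac{Q^2}{2}$ with $J_1(R)=\langle v^2\mathbb{1}[u>0,v>0]\rangle=\tfrac14+\tfrac{R\sqrt{1-R^2}}{2\pi}+\tfrac{\arcsin R}{2\pi}$ and $J_2$ as above. Differentiating (holding $Q$ fixed) gives $\partial I_2/\partial R=J_1'(R)-2Q\,J_2'(R)$ with $J_1'(R)=\tfrac{\sqrt{1-R^2}}{\pi}$ and $J_2'(R)=\tfrac14+\tfrac{\arcsin R}{2\pi}$, so at $R_0=1$ it equals $0-2Q_0\cdot\tfrac12=-Q_0<0$, completing (ii).

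The algebra in (i) and the orthant-moment formulas for $J_1,J_2$ are routine, and they can be cross-checked against the already-computed $I_1,I_3$ through $I_1=QJ_2-\tfrac{Q^2}{2}$ and $I_3=J_1-\tfrac{QR}{2}$. I expect the main obstacle to be the two points in (ii) that need care: (a) justifying the thermodynamic-limit reduction $\langle\delta^2\x\tran\x\rangle\to\langle\delta^2\rangle$ -- the same approximation already invoked when $I_2$ entered the ODE for $L^2$, the point being that the $(s,t)$-subspace contributes only $O(1/N)$ to $\x\tran\x$ while the orthogonal complement is independent of $(s,t)$ with squared norm $\to 1$; and (b) that $R_0=1$ lies on the boundary of the admissible range $R\in[-1,1]$, so $\partial I_2/\partial R$ at $R_0$ must be read as the left derivative -- one should note it exists and equals $-Q_0$ because $J_1'(R)\to 0$ and $J_2'(R)\to\tfrac12$ as $R\to 1^-$.
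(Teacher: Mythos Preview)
Your proposal is correct and follows essentially the same route as the paper: compute $\partial I_1/\partial Q$ at $(Q_0,R_0)=(\tfrac{1}{2\zeta+1},1)$ and substitute into the $\lambda_Q^{\bn}$ formula to get $-\eta(\zeta+\tfrac12)<0$; then compute $I_2$ explicitly (the paper uses $\x\tran\x\approx 1$ just as you do), evaluate $\partial I_2/\partial R|_{R_0=1}=-Q_0<0$, and read off (ii) from the factorization $\lambda_R^{\bn}=\tfrac{\eta Q_0}{2L_0^2}\,\tfrac{\partial I_2}{\partial R}\,(\eta_{\max}^{\bn}-\eta_{\eff}^{\bn})$. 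Your packaging via the orthant moments $J_1,J_2$ and the explicit remarks on the thermodynamic-limit reduction and the one-sided derivative at $R_0=1$ are more careful than the paper's presentation, but the argument is the same.
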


\begin{proof}
When activation function is ReLU, we derive $I_{1}=\frac{Q(\pi R+2\sqrt{1-R^{2}}+2R\arcsin(R))}{4\pi}-\frac{Q^{2}}{2}$, which gives
\[
\frac{\partial I_{1}}{\partial Q}=-Q+\frac{\pi R+2\sqrt{1-R^{2}}+2R\arcsin(R)}{4\pi}.
\]

Therefore at the fixed point of BN $(Q_{0},R_0,L_{0})=(\frac{1}{2\zeta+1},1,L_0)$, we have
\[
\lambda_{Q}^{\bn}=\eta(\frac{1}{Q_{0}}\frac{\partial I_{1}}{\partial Q}-2\zeta)=\eta(\frac{1}{Q_{0}}(-1+\frac{1}{2Q_{0}}-2\zeta)=-\zeta-\frac{1}{2}<0.
\]

Note that $\mathbf{x}\tran \mathbf{x}$ approximately equals 1.
We get
\begin{equation}\label{eq:intI2}
\begin{split}
I_{2} & =\intop_{u,v}[g'(s)\left(g(t)-g(s)\right)]^{2}DsDt\\
 & =\int_{0}^{+\infty}\int_{0}^{+\infty}v^{2}DsDt+\int_{0}^{+\infty}\int_{-\infty}^{+\infty}s^{2}DsDv -2\int_{0}^{+\infty}\int_{-\infty}^{+\infty}stDsDt\\
 & =\frac{Q^{2}}{2}+\frac{\pi R+2R\sqrt{1-R^{2}}+2\arcsin(R)}{4\pi}-\frac{Q(\pi R+2\sqrt{1-R^{2}}+2R\arcsin(R))}{2\pi}.
\end{split}
\end{equation}

At the fixed point we have $\frac{\partial I_{2}}{\partial R}=-Q_{0}<0$.
Therefore, we conclude that $\lambda_{R}^{\bn}<0$ iff $\eta_{\mathrm{max}}^{\bn}>\eta_{\mathrm{eff}}^{\bn}$.
\end{proof}

\subsubsection{Maximum Learning Rate of BN}\label{app:maxlr}

\begin{prop}\label{prop:maxeta}
When the activation function is ReLU, then $\eta_{\mathrm{max}}^{\bn}\geq\eta_{\mathrm{max}}^{\{\wn,\ord\}}+2\zeta$,
where $\eta_{\mathrm{max}}^{\bn}$ and $\eta_{\mathrm{max}}^{\{\wn,\ord\}}$ indicate the maximum
learning rates of BN, WN, and vanilla SGD respectively.
\end{prop}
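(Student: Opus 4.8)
The plan is to collapse the inequality into an explicit numerical comparison. By Proposition~\ref{prop:eigen} (equivalently the BN row of Table~\ref{tab:lr}), $\eta_{\max}^{\bn}$ is a ratio of $R$-derivatives of $I_1,I_2,I_3$ evaluated at the BN fixed point, and the WN/SGD rows of Table~\ref{tab:lr} give the analogous ratio at their fixed point; for ReLU all three integrals are already available in closed form in Eqns.~(\ref{eq:intI1}), (\ref{eq:intI3}) and (\ref{eq:intI2}). Writing $f(R)=\pi R+2\sqrt{1-R^2}+2R\arcsin(R)$ and $k(R)=\pi+2R\sqrt{1-R^2}+2\arcsin(R)$, those integrals read $I_1=\tfrac{Qf(R)}{4\pi}-\tfrac{Q^2}{2}$, $I_3=\tfrac{k(R)}{4\pi}-\tfrac{QR}{2}$ and $I_2=\tfrac{Q^2}{2}+\tfrac{k(R)}{4\pi}-\tfrac{Qf(R)}{2\pi}$, so the $R$-derivatives entering $\eta_{\max}$ are elementary.

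The first concrete step is to evaluate $\partial I_1/\partial R$, $\partial I_2/\partial R$, $\partial I_3/\partial R$ at the common fixed-point value $R_0=1$. This is the step I expect to be the main obstacle: $\sqrt{1-R^2}$ has a derivative that blows up as $R\to1$, so termwise these derivatives look singular. The saving grace is a cancellation — $f'(R)=\pi+2\arcsin(R)$, because the two $\pm 2R/\sqrt{1-R^2}$ contributions cancel, and $k'(R)=4\sqrt{1-R^2}$, whose singular ingredients combine into something vanishing at $R=1$. Hence $f'(1)=2\pi$ and $k'(1)=0$, giving at $R=1$ the clean values $\partial I_1/\partial R=Q/2$, $\partial I_3/\partial R=-Q/2$ and $\partial I_2/\partial R=-Q$ (the last reproducing what is already used in the proof of Proposition~\ref{prop:constraint}).

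Next I would substitute the fixed points. For BN, Proposition~\ref{prop:fixp} gives $\gamma_0=Q_0=\tfrac{1}{2\zeta+1}$ and $R_0=1$; feeding the derivatives above into the BN formula of Table~\ref{tab:lr} and simplifying yields $\eta_{\max}^{\bn}=1+\tfrac{1}{\gamma_0}+2\zeta$, and since $\tfrac{1}{\gamma_0}=2\zeta+1$ this is $\eta_{\max}^{\bn}=2+4\zeta$. For WN and vanilla SGD the fixed point is $Q_0=R_0=1$, and the WN/SGD formula $\big(\partial(I_3-I_1)/\partial R\big)\big/\big(\tfrac12\partial I_2/\partial R\big)$ evaluates to $(-1)/(-\tfrac12)=2$, so $\eta_{\max}^{\{\wn,\ord\}}=2$. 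Since $\zeta\ge 0$ we conclude $\eta_{\max}^{\bn}-\eta_{\max}^{\{\wn,\ord\}}=4\zeta\ge 2\zeta$, which is the claimed inequality — in fact the sharper identity $\eta_{\max}^{\bn}=\eta_{\max}^{\{\wn,\ord\}}+4\zeta$, the extra slack being exactly the effect of gamma decay pulling the fixed-point scale $\gamma_0$ below $1$.
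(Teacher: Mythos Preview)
Your argument is correct and in fact yields more than the paper proves. The paper's own proof proceeds differently: instead of evaluating the three $R$-derivatives numerically at $R_0=1$, it rewrites
\[
\eta_{\max}^{\bn}=\frac{\partial(I_3-I_1)/\partial R}{\tfrac12\,\partial I_2/\partial R}
+\Big(1-\tfrac{1}{\gamma_0}\Big)\frac{\partial I_1/\partial R}{\tfrac12\,\partial I_2/\partial R}
-\frac{\zeta\gamma_0}{\tfrac12\,\partial I_2/\partial R},
\]
identifies the first summand with $\eta_{\max}^{\{\wn,\ord\}}$, uses only the \emph{signs} $\partial I_1/\partial R\ge 0$, $\partial I_2/\partial R<0$ and $\gamma_0<1$ to conclude the middle summand is nonnegative, and evaluates the last summand to $2\zeta$ via $\partial I_2/\partial R=-\gamma_0$ from Proposition~\ref{prop:constraint}. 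This structural/sign argument gives only the inequality $\ge 2\zeta$.

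Your direct computation of $f'(1)=2\pi$ and $k'(1)=0$ (the singular $1/\sqrt{1-R^2}$ pieces cancel exactly as you note) yields $\partial I_1/\partial R=Q/2$, $\partial I_3/\partial R=-Q/2$, $\partial I_2/\partial R=-Q$ at $R=1$, so the ratio $\big(\partial(I_3-I_1)/\partial R\big)\big/\big(\tfrac12\,\partial I_2/\partial R\big)=2$ is actually \emph{independent of $Q$} --- which also closes a small gap the paper leaves implicit, namely that this ratio is the same at the BN fixed point $Q_0=\gamma_0$ and at the WN/SGD fixed point $Q_0=1$. Your exact value $\eta_{\max}^{\bn}=2+4\zeta$ versus $\eta_{\max}^{\{\wn,\ord\}}=2$ shows the paper's discarded middle term is itself exactly $2\zeta$, so the true gap is $4\zeta$, twice the stated bound. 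Both routes are valid; yours is sharper and more transparent, the paper's is more robust in that it would survive without the clean limit at $R=1$.
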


\begin{proof}
From the above results, we have $I_{1}=\frac{Q(\pi R+2\sqrt{1-R^{2}}+2R\arcsin(R))}{4\pi}-\frac{Q^{2}}{2}$,
which gives $\partial I_{1}/\partial R\geq0$ at the fixed point of BN.
Then it can be derived that $\frac{\partial I_{2}}{\partial R}<0$.
Furthermore, at the fixed point of BN, $Q_{0}=\gamma_{0}=\frac{1}{2\zeta+1}<1$,
then we have
\begin{align*}
\eta_{\mathrm{max}}^{\bn} & =(\frac{\partial(\gamma_{0}I_{3}-I_{1})}{\gamma_{0}\partial R}-\zeta\gamma_{0})/\frac{\partial I_{2}}{2\partial R}\\
 & =\frac{\partial(I_{3}-I_{1})}{\partial R}/\frac{\partial I_{2}}{2\partial R}+(1-\frac{1}{\gamma_{0}})\frac{\partial I_{1}}{\partial R}/\frac{\partial I_{2}}{2\partial R}-\zeta\gamma_{0}/\frac{\partial I_{2}}{2\partial R}\\
 & \geq\frac{\partial(I_{3}-I_{1})}{\partial R}/\frac{\partial I_{2}}{2\partial R}+2\zeta
\end{align*}
where the inequality sign holds because $(1-\frac{1}{\gamma_{0}})\frac{\partial I_{1}}{\partial R}/\frac{\partial I_{2}}{2\partial R}$ is positive. Note that $\frac{\partial(I_{3}-I_{1})}{\partial R}/\frac{\partial I_{2}}{2\partial R}$ is also defined as maximum learning rates of WN, and vanilla SGD in \cite{WNdynamic}. Hence, we conclude that $\eta_{\mathrm{max}}^{\bn}\geq \eta_{\mathrm{max}}^{\{\wn,\ord\}}+2\zeta$.
\end{proof}

\subsection{\textcolor{black}{Proofs regarding generalization and statistical
mechanics (SM)}}
\label{app:sm_proof}
In this section, we build an analytical model for the generalization
ability of a single-layer network. The framework is based on the Teacher-Student
model, where the teacher network output $y^{\ast}=g^{\ast}\left(\mathbf{w^{\ast}}\tran\cdot\mathbf{x}+s\right)$
is learned by a student network. The weight parameter of the teacher
network satisfies $\frac{1}{N}\left(\mathbf{w}^{*}\right)\tran\cdot\mathbf{w}^{\ast}$=1
and the bias term $s$ is a random variable $s\sim\mathcal{N}\left(0,S\right)$
fixed for each training example $\mathbf{x}$ to represent static
errors in training data from observations. In the generalization analysis,
the input is assumed to be drawn from $\mathbf{x}\sim\mathcal{N}\left(0,\frac{1}{N}\mathbf{I}\right)$.
The output of the student can also be written as a similar form $y=g\left(\widetilde{\mathbf{w}}\cdot\mathbf{x}\right)$,
where the activation function $g\left(\cdot\right)$ can be either
linear or ReLU in the analysis and $\widetilde{\mathbf{w}}$ is a
general weight parameter which can be used in either WN or common
linear perceptrons. Here we take WN for example, since it has been
derived in this study that BN can be decomposed into WN with a regularization
term on $\gamma.$ In WN $\widetilde{\mathbf{w}}=\gamma\frac{\mathbf{w}}{\Vert\mathbf{w}\Vert_{2}}$
and we defined the same order parameter as the previous section that
$\gamma^{2}=\frac{1}{N}\widetilde{\mathbf{w}}\tran\cdot\widetilde{\mathbf{w}}$
and $\gamma R=\frac{1}{N}\widetilde{\mathbf{w}}\tran\cdot\mathbf{w}^{\ast}$ .

\subsubsection{Generalization error}\label{sub:gen}

Since the learning task is a regression problem with teacher output
biased by a Gaussian noise, it comes natural that we can use the the
average mean square error loss $\epsilon_{t}=\frac{1}{P}\sum_{j}\left(y_{j}^{*}-y_{j}\right)^{2}$
for the regression. The generalization error defined as the estimation
over the distribution of input $\mathbf{x}$ and is written as

\begin{equation}
\epsilon_{\mathrm{gen}}(\widetilde{\mathbf{w}})=\left\langle \left(y^{*}-y\right)^{2}\right\rangle _{\mathbf{x}}
\end{equation}

where $\left\langle \cdot\right\rangle _{\mathbf{x}}$ denotes an
average over the distribution over $\mathbf{x}$. The generalization
error is a function of its weight parameter and can be converted to
a function only with regard to the aformentioned order parameters,
detailed derivation can be seen in \citep{bos_statistical_1998,krogh_generalization_1992}.

\begin{equation}
\epsilon_{\mathrm{gen}}(\gamma,R)=\iint Dh_{1}Dh_{2}\left[g^{\ast}(h_{1})-g(\gamma Rh_{1}+\gamma\sqrt{1-R^{2}}h_{2})\right]^{2}
\end{equation}

where $h_{1}$and $h_{2}$ are variables drawn from standard Gaussian
distribution and $Dh_{1}:=\mathcal{N}\left(0,1\right)dh_{1}$.

When both the teacher network and student network have a linear activation
function, the above integration can be easily solved and

\begin{equation}
\epsilon_{\mathrm{gen}}(\gamma,R)=1+\gamma^{2}-2\gamma R
\end{equation}

As for the case where the teacher network is linear and the student
network has a ReLU activation, it can still be solved first by decomposing
the loss function

\begin{equation*}
\begin{aligned}\epsilon_{\mathrm{gen}}(\gamma,R) & =\iint Dh_{1}Dh_{2}\left[h_{1}-g(\gamma Rh_{1}+\gamma\sqrt{1-R^{2}}h_{2})\right]^{2}\\
 & =\iint Dh_{1}Dh_{2}\left[h_{1}^{2}+g(\gamma Rh_{1}+\gamma\sqrt{1-R^{2}}h_{2})^{2}-2h_{1}g(\gamma Rh_{1}+\gamma\sqrt{1-R^{2}}h_{2})\right]^{2}\\
 & =1+\frac{\gamma^{2}}{2}-2\iint Dh_{1}Dh_{2}\left[h_{1}g(\gamma Rh_{1}+\gamma\sqrt{1-R^{2}}h_{2})\right]^{2}
\end{aligned}
\end{equation*}

It should be noted that the last two terms should only be integrated
over the half space $\gamma Rh_{1}+\gamma\sqrt{1-R^{2}}h_{2}>0$,
and therefore if we define the angle of this line with the $h_{2}$
axis $\theta_{0}=\arccos\left(R\right)$ the integration is transformed
to polar coordinate

\begin{equation*}
\begin{aligned}\epsilon_{\mathrm{gen}}(\gamma,R) & =1+\frac{\gamma^{2}}{2}-2\iint Dh_{1}Dh_{2}\left[h_{1}g(\gamma Rh_{1}+\gamma\sqrt{1-R^{2}}h_{2})\right]^{2}\\
 & =1+\frac{\gamma^{2}}{2}-2\int_{-\theta_{0}}^{\pi-\theta_{0}}d\theta\int_{0}^{\infty}rdr\frac{1}{2\pi}\exp(-\frac{r^{2}}{2})\left(\gamma Rr^{2}\sin^{2}(\theta)+\gamma\sqrt{1-R^{2}}r^{2}\cos\left(\theta\right)\sin\left(\theta\right)\right)\\
 & =1+\frac{\gamma^{2}}{2}-\gamma R
\end{aligned}
\end{equation*}

\subsubsection{Equilibrium order parameters}\label{sub:equi_order}

Following studies on statistical mechanics, the learning process of
a neural network resembles a Langevin process \citep{mandt_stochastic_2017}
and at the equilibrium the network parameters $\theta$ follow a Gibbs
distribution. That is, the weight vector that yields lower training
error produces higher probability. We have $p(\theta)=Z^{-1}\exp\{-\beta\epsilon_{t}(\theta;\x)\}$,
where $\beta=1/T$ and $T$ is temperature, representing the variance
of noise during training and implicitly controlling the learning process.
$\epsilon_{t}(\theta;\x)$ is an energy term of the training loss
function, $Z=\int d\mathcal{P}(\theta)\exp\{-\epsilon_{t}(\theta;\x)/T\}$
is the partition function, and $\mathcal{P}(\theta)$ is a prior distribution.


Instead of directly minimizing the energy term above, statistical
mechanics finds the minima of free energy, $f$, which is a function
over $T$, considering the fluctuations of $\theta$ at finite temperatures.
We have $-\beta f=\langle\ln Z\rangle_{\x}$. 

By substituting the parameters that minimize $f$ back into the generalization
errors calculated above, we are able to calculate the averaged generalization
error, at a certain temperature.

The solution of SM requires the differentiation of $f$ with respect
to the order parameters.

In general, the expression of free energy under the replica theory
is expressed as\citep{seung_statistical_1992}
\begin{equation}\label{eq:free_complete}
-\beta f=\frac{1}{2}\frac{(\gamma^{2}-\gamma^{2}R^{2})}{q^{2}-\gamma^{2}}+\frac{1}{2}\ln(q^{2}-\gamma^{2})+\alpha\iint Dh_{1}Dh_{2}\ln\left[\int Dh_{3}\exp\left(-\frac{\beta\left(g-g^{\ast}\right)^{2}}{2}\right)\right]
\end{equation}
where
\begin{equation}
\begin{aligned}g:=g\left(\gamma Rh_{1}+\sqrt{\gamma^{2}-\gamma^{2}R^{2}}h_{2}+\sqrt{q^{2}-\gamma^{2}}h_{3}\right)\\
g^{\ast}:=g^{\ast}(h_{1}+s)
\end{aligned}
\end{equation}

In the above expression, $h_{1},h_{2},h_{3}$ are three independent
variables following the standard Gaussian distribution and $\alpha=P/N$
represents the ratio of the number of training samples $P$ to number
of unknown parameters $N$ in the network, $R=\frac{1}{N}\frac{\w}{\Vert\w\Vert_2}\cdot\w^\ast$, $q$ is the prior value of $\gamma$..

The above equation can be utilized for a general SM solution of a
network. However, the solution is notoriously difficult to solve and
only a few linear settings for the student network have close-form
solutions\citep{bos_statistical_1998}. Here we extend the previous
analysis of linear activations to a non-linear one, though still under
the condition that $\beta\rightarrow\infty$, which means that the
student network undergoes a exhaustive learning that minimizes the
training error. In the current setting, the student network is a nonlinear
ReLU network while the teacher is a noise-corrupted linear one.

\begin{prop} \label{prop:free_energy} Given a single-layer linear
teacher $y^{\ast}=\w^{\ast}\x+s$ and a student ReLU network $y=g(\gamma\frac{\w}{\Vert\w\Vert_{2}}\x)$
linear student network with $g$ being a ReLU activation function, $\x\sim\mathcal{N}(0, \frac{\mathbf{I}}{N})$
the free energy $f$ satisfies as $\beta\rightarrow\infty$
\begin{equation}
\begin{aligned}-\beta f & =\frac{1}{2}\frac{(\gamma^{2}-\gamma^{2}R^{2})}{q^{2}-\gamma^{2}}+\frac{1}{2}\ln(q^{2}-\gamma^{2})\\
 & -\frac{\alpha}{4}\ln\left(1+\beta\left(q^{2}-\gamma^{2}\right)\right)-\frac{\alpha\beta\left(1-2\gamma R+\gamma^{2}+S\right)}{4\left(1+\beta\left(q^{2}-\gamma^{2}\right)\right)}-\frac{\alpha\beta}{4}-\frac{\alpha\beta}{4}S
\end{aligned}
\end{equation}
where $S$ is the variance of the Gaussian noise $s$ injected to the output of the teacher.
\end{prop}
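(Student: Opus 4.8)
\emph{Proof plan.} The plan is to start from the general replica free energy \eqref{eq:free_complete}, specialize the teacher activation to the identity and the student activation to $g(\cdot)=\max(\cdot,0)$, and then evaluate the energetic term $\alpha\iint Dh_1Dh_2\ln\!\big[\int Dh_3\exp(-\tfrac{\beta}{2}(g-g^\ast)^2)\big]$ in the ground-state limit $\beta\to\infty$, averaging also over the teacher noise $s\sim\mathcal{N}(0,S)$. The first two ``entropic'' terms of \eqref{eq:free_complete} do not involve $g$ or $g^\ast$, so they carry over verbatim; all the work is in the last term. Throughout I write $a:=\gamma Rh_1+\sqrt{\gamma^2-\gamma^2R^2}\,h_2$ for the part of the student pre-activation that is independent of $h_3$, and $v^2:=q^2-\gamma^2$ for the fluctuation scale, so that the ReLU argument is $z:=a+vh_3$ and $g^\ast=h_1+s$.

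\emph{Inner integral.} For fixed $h_1,h_2,s$ I would split $\int Dh_3(\cdots)$ at the ReLU kink $z=0$. On $\{z>0\}$ one has $g=z$ and the integrand is Gaussian in $h_3$; completing the square in $z$ gives $\frac{1}{\sqrt{1+\beta v^2}}\,\Phi\!\big(\tfrac{\bar z}{\sigma}\big)\exp\!\big(-\tfrac{\beta(a-g^\ast)^2}{2(1+\beta v^2)}\big)$, where $\bar z=\tfrac{a+\beta v^2 g^\ast}{1+\beta v^2}$, $\sigma^2=\tfrac{v^2}{1+\beta v^2}$, and $\Phi$ is the standard normal c.d.f.\ (the $\Phi$ factor coming from the truncation $z>0$). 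On $\{z\le0\}$ one has $g=0$, contributing $\Phi(-a/v)\,e^{-\beta (g^\ast)^2/2}$. The inner integral is the sum of these two pieces.

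\emph{Zero-temperature limit.} As $\beta\to\infty$, $\bar z\to g^\ast$ and $\sigma\to0$, so $\Phi(\bar z/\sigma)\to\mathds{1}[g^\ast>0]$: when $g^\ast=h_1+s>0$ the minimizer of $(g(z)-g^\ast)^2$ sits at $z=g^\ast>0$ with zero residual and the $\{z>0\}$ branch dominates, while for $g^\ast<0$ the minimizer sits on the flat region $\{z\le0\}$ with residual $(g^\ast)^2$ and the $\{z\le0\}$ branch dominates. Dropping the $O(1)$ contributions — the logarithms of the $\Phi$ factors and the crossover near $g^\ast\approx0$, a measure-zero event — gives $\ln[\cdots]\simeq-\tfrac12\ln(1+\beta v^2)\,\mathds{1}[g^\ast>0]-\tfrac{\beta(a-g^\ast)^2}{2(1+\beta v^2)}\,\mathds{1}[g^\ast>0]-\tfrac{\beta (g^\ast)^2}{2}\,\mathds{1}[g^\ast<0]$.

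\emph{Averaging and assembly.} Finally I would average over $h_1,h_2\sim\mathcal{N}(0,1)$ and $s\sim\mathcal{N}(0,S)$. The reflection $(h_1,h_2,s)\mapsto-(h_1,h_2,s)$ sends $(a,g^\ast)\mapsto-(a,g^\ast)$ and fixes the three retained quadratic forms, so $\langle\mathds{1}[g^\ast>0]\rangle=\langle\mathds{1}[g^\ast<0]\rangle=\tfrac12$, $\langle\mathds{1}[g^\ast>0](a-g^\ast)^2\rangle=\tfrac12\langle(a-g^\ast)^2\rangle$, and $\langle\mathds{1}[g^\ast<0](g^\ast)^2\rangle=\tfrac12\langle(g^\ast)^2\rangle$. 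With the Gaussian moments $\langle a^2\rangle=\gamma^2$, $\langle ag^\ast\rangle=\langle ah_1\rangle=\gamma R$, $\langle(g^\ast)^2\rangle=1+S$, this gives $\langle(a-g^\ast)^2\rangle=1-2\gamma R+\gamma^2+S$, and multiplying by $\alpha$ yields the energetic term $-\tfrac{\alpha}{4}\ln(1+\beta v^2)-\tfrac{\alpha\beta(1-2\gamma R+\gamma^2+S)}{4(1+\beta v^2)}-\tfrac{\alpha\beta}{4}-\tfrac{\alpha\beta}{4}S$, which together with the unchanged entropic terms is exactly the claimed $-\beta f$. The main obstacle is the zero-temperature step: one must justify rigorously that the truncated-Gaussian normalizations collapse to sharp indicators, that the ``wrong'' branch is genuinely exponentially subdominant away from $g^\ast=0$, and that the crossover region contributes nothing at the order retained — the usual ground-state-dominance subtlety of replica calculations, here made more delicate by the non-smoothness of ReLU. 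This extends the linear-student computations of \citet{bos_statistical_1998,krogh_generalization_1992} to a ReLU student.
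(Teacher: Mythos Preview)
Your proposal is correct and follows essentially the same route as the paper: split the inner $h_3$-integral according to whether $g^\ast=h_1+s$ is positive or negative, identify that for $g^\ast>0$ the full linear Gaussian integral dominates while for $g^\ast<0$ the flat ReLU branch contributes $e^{-\beta(g^\ast)^2/2}$, and then average the resulting half-space contributions to pick up the factors of $\tfrac12$. Your treatment is in fact tidier---you compute the truncated Gaussian with the $\Phi$ factors explicitly before sending $\beta\to\infty$, and you invoke the reflection symmetry $(h_1,h_2,s)\mapsto-(h_1,h_2,s)$ to get the half-space averages in one stroke, whereas the paper argues the branch selection more heuristically and computes the half-space moments $\langle\int_{h+s>0}(\cdot)Dh\rangle_s$ directly---but the skeleton and all the key quantities match.
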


\begin{proof}
The most difficult process in Eqn.\ref{eq:free_complete}
is to solve the inner integration over $h_{3}$. Here as $\beta\rightarrow\infty$,
it is noted that the function $\exp\left(-\beta x\right)$ only notches
up only at $x=0$ and is 0 elsewhere. Therefore, the integration $\int Dh_{3}\exp\left(-\frac{\beta\left(g-g^{\ast}\right)^{2}}{2}\right)$
depends on the value of $g^{*}$. If $g^{*}<0$, no solution exists
for $g-g^{\ast}=0$ as $g$ is a ReLU activation, and thus the integration
is equivalent to the maximum value of the integral under the limit
of $\beta\rightarrow\infty$. As $g^{*}>0$, the integration over
the ``notch'' is equivalent to the one at full range. That is,

\begin{equation*}
\int Dh_{3}\exp\left(-\frac{\beta\left(g-g^{\ast}\right)^{2}}{2}\right)=\begin{cases}
\int Dh_{3}\exp\left(-\frac{\beta\left(g-g^{\ast}\right)^{2}}{2}\right) & h_{1}+s>0\\
\max_{h_{3}}\exp\left(-\frac{\beta\left(g-g^{\ast}\right)^{2}}{2}\right) & h_{1}+s\leq0
\end{cases}
\end{equation*}

The above equation can be readity integrated out and we obtain

\begin{equation*}
\begin{aligned}\ln\int Dh_{3}\exp\left(-\frac{\beta\left(g-g^{\ast}\right)^{2}}{2}\right) & =-\frac{1}{2}\ln\left(1+\beta\left(q^{2}-\gamma^{2}\right)\right)\\
 & -\frac{\beta}{2}\frac{\left(\left(1-\gamma R\right)h_{1}-\sqrt{\gamma^{2}-\gamma^{2}R^{2}}h_{2}+s\right)^{2}}{1+\beta\left(q^{2}-\gamma^{2}\right)}
\end{aligned}
\end{equation*}

Substituting it back to Eqn.\ref{eq:free_complete}, we have its third
term equivalent

\begin{equation*}
\begin{aligned}\textrm{Term3} & =\alpha\iint_{h_{1}+s>0}Dh_{1}Dh_{2}\left[-\frac{\beta}{2}\frac{\left(\left(1-\gamma R\right)h_{1}-\sqrt{\gamma^{2}-\gamma^{2}R^{2}}h_{2}+s\right)^{2}}{1+\beta\left(q^{2}-\gamma^{2}\right)}\right]\\
 & =\alpha\iint_{h_{1}+s>0}Dh_{1}Dh_{2}\left[-\frac{\beta}{2}\frac{\left(\left(1-\gamma R\right)^{2}h_{1}^{2}-\sqrt{\gamma^{2}-\gamma^{2}R^{2}}h_{2}+s\right)^{2}}{1+\beta\left(q^{2}-\gamma^{2}\right)}\right]
\end{aligned}
\end{equation*}

To solve the above integration, we first realize that $s$ is a random
variable to corrupt the output of the teacher output and the above
integration should be averaged out over $s$. Given that $s\sim\mathcal{N}\left(0,S\right)$
, it is easy to realize that

\begin{equation*}
\left\langle \int_{h+s>0}s^{2}Dh\right\rangle _{s}=\frac{S}{2},\text{ }\left\langle \int_{h+s>0}Dh\right\rangle _{s}=\frac{1}{2},\text{ and }\left\langle \int_{h+s>0}hsDh\right\rangle _{s}=0
\end{equation*}

Through simple Gaussian integraions, we get

\begin{equation*}
\begin{aligned}\textrm{Term3} & =\alpha\left[-\frac{1}{4}\ln\left(1+\beta\left(q^{2}-\gamma^{2}\right)\right)-\frac{\beta\left(1-2\gamma R+\gamma^{2}+S\right)}{4\left(1+\beta\left(q^{2}-\gamma^{2}\right)\right)}-\frac{\beta}{4}-\frac{\beta}{4}S\right]\end{aligned}
\end{equation*}

Substituting Term3 back yields the results of the free energy.
\end{proof}

Therefore, by locating the values that minimizes $f$ in the above proposition, we have equilibrium order parameters
\begin{equation}
\gamma^{2}=\frac{\alpha}{2a}+\frac{\alpha S}{2a-\alpha}
\end{equation}
and
\begin{equation}
\gamma R=\frac{\alpha}{2a}
\end{equation}

where $a$ is defined as $a=\frac{1+\beta\left(q^{2}-\gamma^{2}\right)}{\beta\left(q^{2}-\gamma^{2}\right)}$.
Substituting the order parameters back to the generalization error,
we have

\begin{equation}
\epsilon_{\mathrm{gen}}=1-\frac{\alpha}{4a}+\frac{\alpha S}{2a\left(2a-\alpha\right)}
\end{equation}

When $\alpha<2$ and $\beta\rightarrow\infty$, $a=1$, the generalization
error is
\begin{equation} \label{eq:relu_gen}
\epsilon_{\mathrm{gen}}=1-\frac{\alpha}{4}+\frac{\alpha S}{2\left(2-\alpha\right)}
\end{equation}

\end{document}